\LetLtxMacro{\oldtextsc}{\textsc}
\renewcommand{\textsc}[1]{\oldtextsc{\scalefont{1.1}#1}}
\definecolor{shadecolor}{gray}{0.9}
\newcounter{parcount}
\lstdefinestyle{mystyle}{
    commentstyle=\color{OliveGreen},
    keywordstyle=\color{BurntOrange},
    numberstyle=\tiny\color{black!60},
    stringstyle=\color{MidnightBlue},
    basicstyle=\ttfamily,
    breakatwhitespace=false,
    breaklines=true,
    captionpos=b,
    keepspaces=true,
    numbers=left,
    numbersep=5pt,
    showspaces=false,
    showstringspaces=false,
    showtabs=false,
    tabsize=2
}
\newcommand{\E}{\mathbb{E}}
\newcommand{\R}{\mathbb{R}}
\newcommand{\N}{\mathcal{N}}
 \newacronym{KL}{kl}{Kullback-Leibler}
\newacronym{ELBO}{elbo}{\emph{evidence lower bound}}
\newacronym{POPELBO}{pop-elbo}{\emph{population evidence lower bound}}
\newacronym{SVI}{svi}{stochastic variational inference}
\newacronym{BUMPVI}{bump-vi}{bumping variational inference}
\newacronym{GMM}{gmm}{Gaussian mixture model}
\newacronym{LDA}{lda}{latent Dirichlet allocation}
\newacronym{SUTVA}{sutva}{stable unit treatment value assumption}
\definecolor{light}{RGB}{220, 188, 188}
\definecolor{mid}{RGB}{185, 124, 124}
\definecolor{dark}{RGB}{143, 39, 39}
\definecolor{highlight}{RGB}{0, 255, 0}
\definecolor{gray10}{gray}{0.1}
\definecolor{gray20}{gray}{0.2}
\definecolor{gray30}{gray}{0.3}
\definecolor{gray40}{gray}{0.4}
\definecolor{gray60}{gray}{0.6}
\definecolor{gray70}{gray}{0.7}
\definecolor{gray80}{gray}{0.8}
\definecolor{gray90}{gray}{0.9}
\definecolor{gray95}{gray}{0.95}
\definecolor{comment}{gray}{0.50}
\renewcommand{\@makefnmark}{\hbox{\textsuperscript{\tiny{\@thefnmark}}}}
\newtheorem{theorem}{Theorem}
\title{\textbf{Estimating Wage Disparities Using Foundation Models}}
\author{\begin{tabular}{ccc}\textbf{Keyon Vafa}  & \textbf{Susan Athey} & \textbf{David M. Blei} \\Harvard University & Stanford University & Columbia University\\
\end{tabular} \date{}}
\begin{document}
\setlength{\tabcolsep}{14pt}
\maketitle
\setlength{\tabcolsep}{6pt}

\begin{abstract}
\noindent
The rise of foundation models marks a paradigm shift in machine learning: instead of training specialized models from scratch, foundation models are first trained on massive datasets before being adapted or fine-tuned to make predictions on smaller datasets. 
Initially developed for text, foundation models have also excelled at making predictions about social science data. 
However, while important social science problems use prediction as an intermediate step, they ultimately require different criteria for success, such as when estimating causal effects or decomposing group differences into explained and unexplained components. 
In this paper, we develop methods for fine-tuning foundation models to perform these estimation problems. 
We first characterize an omitted variable bias that can arise when a foundation model is only fine-tuned to maximize predictive accuracy, a common approach in machine learning. We then provide a novel set of conditions for fine-tuning under which estimates derived from a foundation model are $\sqrt{n}$-consistent. Based on this theory, we develop new fine-tuning algorithms that empirically mitigate this omitted variable bias. 
To demonstrate our ideas, we study gender wage decomposition. This is a statistical estimation problem from econometrics where the goal is to decompose the gender wage gap into components that can and cannot be explained by career histories of workers.
Classical methods for decomposing the wage gap employ simple predictive models of wages which condition on coarse summaries of career history that may omit factors that are important for explaining the gap. 
Instead, we use a custom-built foundation model to decompose the gender wage gap, which captures a richer representation of career history than simple models.
Using data from the Panel Study of Income Dynamics, we find that career history explains more of the gender wage gap than standard econometric models can measure, and we identify elements of career history that are omitted by standard models but are important for explaining the wage gap.

 \end{abstract}

Foundation models have revolutionized the machine learning approach to prediction \citep{devlin2018bert,radford2019language,bommasani2021opportunities}. In contrast to traditional predictive models, which are trained to make predictions on specific, individual tasks, foundation models are typically trained in two steps: they first are trained on massive, passively-collected datasets and then are adapted to specific tasks. The success of these models stems from their ability to transfer information learned during the initial training period to new prediction problems through approaches like supervised fine-tuning --- adjusting a model's parameters to minimize prediction error on labeled examples from a target task \citep{devlin2018bert}. For example, large language models \citep{devlin2018bert,radford2019language} are foundation models that were originally trained to predict the next word of Internet articles, but can be fine-tuned to make other predictions involving text, like the next word of a conversation or the sentiment of a movie review.

While foundation models have been successful at making predictions about social science data \citep{vafa2023career,savcisens2024using}, many core problems in social science require more than just accurate predictions. 
For example, social scientists often aim to estimate causal effects under the assumption of unconfoundedness \citep{imbens2004nonparametric} or decompose observed differences between groups into explained and unexplained components based on observable factors \citep{kitagawa1955components,oaxaca1973male,blinder_wage_1973} --- 
isomorphic problems that use prediction as an intermediate step but ultimately require different criteria for success. While fine-tuning foundation models may be useful for these analyses, optimizing for predictive accuracy alone does not guarantee valid decompositions or causal estimates. 

In this paper, we develop methods for adapting foundation models to perform decomposition and causal effect estimation by modifying how they are fine-tuned. Rather than fine-tuning foundation models to minimize prediction error, we develop objectives specifically designed for these estimation problems. Our first contribution is characterizing a statistical bias that arises when a foundation model discards information that may not be important for prediction but is relevant for the estimation problem. We then provide a novel set of conditions for fine-tuning under which estimates derived from a foundation model are not only unbiased but also consistent at a fast asymptotic rate. These conditions motivate new \textit{debiased fine-tuning} methods. Our key insight is that fine-tuning foundation models for these applications requires addressing an omitted variable bias that standard supervised fine-tuning does not address.

To demonstrate these ideas, we focus on an application that addresses a classic decomposition problem from labor economics: estimating the difference between how individuals with the same labor market experience get paid when they belong to different demographic groups (see \citet{blau_gender_2017} and \citet{altonji_chapter_1999} for reviews). Accurately estimating this \textit{unexplained wage gap} is important to help guide policy for reducing disparities. But the unexplained gap is challenging to estimate with traditional econometric models. It involves predicting an individual's wage from their labor market history, a high-dimensional and complicated variable. Our paper demonstrates that foundation models of labor market history can improve the predictions that underlie wage gap estimates. 

We use CAREER, a foundation model of labor market history \citep{vafa2023career}, to estimate unexplained wage gaps. CAREER is initially fit to a massive resume dataset to predict the next job an individual will have, rather than their wage. Naively, we can fine-tune CAREER to make accurate predictions of wage on the datasets used for wage gap estimation. However, using this approach to estimate the unexplained wage gap can amplify a classical problem: omitted variable bias. 
Instead, we develop new \textit{debiased fine-tuning} methods to fine-tune foundation models so they can properly estimate unexplained wage gaps. 
The key is to fine-tune foundation models not to minimize predictive error but rather to reduce omitted variable bias. 
In semi-synthetic experiments, we show that debiased fine-tuning methods form better estimates of the unexplained wage gap than the standard fine-tuning approach.

We use our methods to estimate the explained gender wage gap on survey data from the Panel Study of Income Dynamics (PSID) \citep{psid}. We first demonstrate that foundation models form accurate predictions of wage and gender; they outperform standard econometric models for predicting wage by 10-15\%. We then use debiased fine-tuning methods to estimate the gender wage gap. We find that history consistently explains more of the gap than the variables typically included in standard econometric models. We conclude by studying which aspects of work history, captured by foundation models but omitted from prior approaches, are important for explaining the wage gap. 

While this paper studies unexplained wage gaps in detail, the results and methods we develop are applicable to a broader set of problems, such as causal estimation. In particular, as observed by \citet{fortin2011decomposition} and others in the literature, the problem of estimating a decomposition of a wage gap into explained and unexplained components is isomorphic to the problem of estimating the average effect of a treatment under the assumption of unconfoundedness. Although the interpretation of the estimate is distinct for decompositions, 
the statistical theory that applies to estimation is the same (see \citet{imbens2015causal} for a review).  Thus, our results also provide new theory and methods for the problem of incorporating foundation models into the estimation of treatment effects.

Relative to both the causal inference and decomposition literatures, our theory is adapted to a scenario where a foundation model may bring in information from a distinct, larger dataset, and where we fine-tune the model to avoid omitted variable bias.  If we solve the latter problem well enough, then the traditional semi-parametric theory (e.g. \citet{chernozhukov2018double}) can be applied as if the representations of high-dimensional covariates derived from the fine-tuned foundation model are sufficient statistics for the full high-dimensional covariate vector.  The methods we introduce thus provide a widely applicable new framework for leveraging the capabilities of foundation models while mitigating biases due to omitted variables that they may introduce.

\section{Explaining Wage Gaps with Foundation Models}
\label{sec:framework}
The unexplained wage gap is the wage gap between two groups of individuals with the same observed characteristics. We estimate an unexplained wage gap that arises when individuals in different groups have the same labor market history. 

Consider the gender wage gap. 
In the United States, females earn roughly 80\% the male hourly wage \citep{blau_gender_2017}. Motivated by the fact that the male and female labor forces differ in observable ways, a large literature seeks to explain this wage gap through differences in these observable factors \citep{kitagawa1955components,blinder1973wage,oaxaca1973male,blau_gender_2017}. 
One of the most important factors for explaining the gender wage gap is differences in the number of years that males and females have spent in the labor force \citep{blau_gender_2017}. 
Understanding the difference in wages between males and females with the \textit{same career histories} can help guide policy: if the unexplained gap is large, attempts to close the gap may involve interventions to address problems in bargaining or fairness in wage setting. On the other hand, if the gap can be accounted for by gender differences in career histories, these interventions might target career pathways. 

More generally, consider $N$ individuals, indexed by $i = 1, \dots, N$. Each individual belongs to a binary group $A_i \in \{0, 1\}$ (e.g. $A_i=0$ denotes males and $A_i=1$ denotes females). Each individual also has a career history $X_i$, a sequence of $T$ discrete occupations and years, $X_i=((J_{i1}, D_{i1}), \dots, (J_{iT},D_{iT})) \in \mathcal X$, and where each occupation label $J_{it} \in \{1, \dots, N_J\}$ encodes the occupation an individual worked in during year $D_t$ (or their labor status if they're not working, e.g. ``unemployed'' or ``student''). Finally, denote an individual's log-wage by $Y_i \in \R$. Each individual is sampled i.i.d. from a joint distribution $P(X, A, Y)$. Define the conditional expectation function, $\mu_a(x) = \E_P[Y|A=a, X=x]$, and the propensity function, $e(x) = P(A=1|X=x)$.

The raw \textbf{wage gap} is
\begin{equation*}
\text{WG} = \E_{p(x|a=1)}[\mu_1(X)] - \E_{p(x|a=0)}[\mu_0(X)].
\end{equation*}
This is the difference in the average wage between the two groups. 
Our goal is to estimate the wage gap that is \textbf{unexplained} by history: 
\begin{equation*}
\label{eqn:unexplained_gap}
\text{UWG} = \E_{p(x|a=1)}[\mu_1(X) - \mu_0(X)].
\end{equation*}
This is the average difference in the expected wage between individuals in the two groups who have the same career histories. 
The unexplained and raw wage gaps are linked by a classic decomposition \citep{kitagawa1955components,blinder1973wage,oaxaca1973male}:
\begin{equation}
\begin{aligned}
\text{WG} &= \underbrace{\E_{p(x|a=1)}\left[\mu_1(X) - \mu_0(X)\right]}_{\text{unexplained wage gap}} + \underbrace{\E_{p(x|a=1)}\left[\mu_0(X)\right] - \E_{p(x|a=0)}\left[\mu_0(X)\right]}_{\text{explained wage gap}}
\end{aligned}
\label{eqn:blinder_oaxaca_decomposition}
\end{equation}
The unexplained wage gap is the portion of the raw wage gap that cannot be attributed to gender differences in career histories. 

The explained wage gap could be conditioned on factors in addition to labor market history, e.g. an individual's educational background. For simplicity our notation only includes history, but we incorporate additional observed characteristics of individuals in our empirical analyses. We also note that the unexplained wage gap is only non-parametrically identifiable under an overlap condition (see, e.g. \citet{imbens2004nonparametric}): $P(A=1 | X) < 1$. We assume overlap throughout this paper and we limit the sample of workers we analyze empirically to those with histories where the condition is satisfied.

\subsection*{Foundation models can improve predictions}
A common approach for estimating the unexplained wage gap involves constructing an estimator  $\hat \mu_g(x)$ for $\mu_g(x)$, which in turn can be used to form an estimate of the unexplained wage gap:
\begin{equation}
\label{eqn:outcome_only_model}
\textstyle \frac{1}{N_1} \sum_i A_i * (\hat\mu_1(X_i) - \hat\mu_0(X_i)),
\end{equation}
where $N_1 = \sum_i A_i$. 

However, estimating the relationship between history and wage is challenging with realistic data sizes because career histories are high-dimensional: the number of possible career histories grows exponentially in the number of years someone has worked. This challenge is compounded by the fact that unexplained wage gaps are commonly estimated using small survey-based datasets, particularly in the U.S. where administrative data about worker histories are not generally available to researchers. For this reason, traditional econometric approaches have used a small number of hand-constructed summaries of labor market experience, 
which keeps the number of covariates in the predictive model small relative to the data set size. But these summary statistics do not capture the full complexity of labor market history, and in particular they may omit factors of history that are important for explaining the wage gap. 
~\looseness=-1

For example, a large body of literature has focused on decomposing the gender wage gap in the United States by applying \Cref{eqn:blinder_oaxaca_decomposition} to small survey datasets (see \citet{blau_gender_2017} and \citet{altonji_chapter_1999}  for surveys). 
Rather than including an individual's career history, most analyses include summary statistics about an individual's career history, such as the years of experience or tenure in the current job \citep{mincer1974family,manning2008gender,blau2013feasibility,blau_gender_2017}. Even though many occupational taxonomies contain hundreds of fine-grained categories, it is most common to include coarse-grained occupational categories containing 20-30 categories \citep{blau1999analyzing,blau_gender_2017,boheim2021decomposition,hegewisch2014occupational}. 
Because these models rely on a relatively small number of covariates, $\hat \mu_0$ and $\hat \mu_1$ are typically constructed using relatively simple models models, such as linear regressions \citep{oaxaca1973male,blinder_wage_1973,blau1999analyzing,blau_gender_2017} or LASSO models \citep{boheim2021decomposition,bonaccolto2022gender}. 

However, these incomplete measures of experience discard factors that help explain the wage gap. 
For example, \citet{regan2009work} and \citet{blau2013feasibility} find that potential experience (an inexact measure of experience that does not measure workforce interruptions) explains less of the wage gap than years of actual work experience. 
Moreover, \citet{light1995early} estimates a wage model with detailed measures of year-by-year experience, finding that the timing of work experience explains a substantial portion of the wage gap. 
While incorporating full histories into gender wage gap analyses could ensure these factors aren't discarded, the predictive models used for gender wage gap analyses are too simple to include them.

Foundation models \citep{bommasani2021opportunities} offer an alternative approach. Foundation models are machine learning models that \textit{learn} low-dimensional representations of high-dimensional variables from data. 
These representations are initially learned on massive, passively-collected data after which they can be adapted on specific datasets of interest. 
For example, in natural language processing, foundation models that are trained to predict words using terabytes of Internet text can be adapted to generate responses to human questions \citep{achiam2023gpt,touvron2023llama}. While initially developed for text, foundation models have successfully addressed seemingly intractable prediction problems in domains such as computer vision \citep{dosovitskiy2020image}, music \citep{huang2018music}, and protein generation \citep{madani2023large} 

A foundation model of labor market history can help estimate the unexplained wage gap by providing a low-dimensional representation of history that is predictive of wage. Because representations are learned from data, these estimates are not limited to the features a researcher knows to include.

Formally define a representation to be a function $\lambda(X): \mathcal{X} \to \R^D$. Given a representation $\lambda$, the wage gap unexplained by the representation of history is
\begin{equation*}
\text{UWG}(\lambda) = \E_{p(x|a=1)}[\mu_{1}(\lambda(X)) - \mu_{0}(\lambda(X))],
\end{equation*}
where $\mu_{a}(\lambda(x))=\E[Y|A=a,\lambda(X)=\lambda(x)]$ is the expected wage as a function of the representation $\lambda(x)$. 

For the rest of the paper, we consider estimating the unexplained wage gap using CAREER, a foundation model of labor market history \citep{vafa2023career}. CAREER is trained to learn representations that can predict the next occupation a worker will have from a dataset of 24 million resumes posted online. When these representations are adapted to small survey datasets, CAREER makes more accurate predictions of an individual's next occupation than standard econometric approaches. We consider using these representations to predict an individual's wage. 
~\looseness=-1

\subsection*{Foundation models can introduce omitted variable bias} Foundation models are effective because they compress high-dimensional information into low-dimensional representations. However, we demonstrate that replacing an individual's history with a representation can introduce an omitted variable bias \citep{chernozhukov2022long}.

We say the wage gap unexplained by a representation $\lambda$ is biased if it differs from the wage gap unexplained by the full history. Define this bias as $\text{OVB}(\lambda) = \text{UWG}(\lambda) - \text{UWG}$. It has a closed-form expression: 
\begin{equation}
\label{eqn:ovb}
\text{OVB}(\lambda) = \E_{p(x,a)}\big[(\mu_{A}(\lambda(X))-\mu_A(X))) 
                      *(\alpha_{A}(\lambda(X)) - \alpha_A(X))\big] 
\end{equation}
where
\begin{align*}
\textstyle \alpha_a(x) &= \textstyle -\frac{1-a}{P(A=1)}\left(\frac{e(x)}{1-e(x)}\right),\\ 
\alpha_{a}(\lambda(x)) &= \textstyle -\frac{1-a}{P(A=1)}\left(\frac{e(\lambda(x))}{1-e(\lambda(x))}\right),
\end{align*}
for representation-based propensity function $e(\lambda(x))=P(A=1|\lambda(X)=\lambda(x))$. 
\Cref{app:ovb_derivation} contains a detailed derivation. \Cref{eqn:ovb} can also be seen as a special case of the general omitted-variable-bias formula in \citet{chernozhukov2022long}, where $\alpha(\lambda(X))$ and $\alpha(X)$ correspond to the short and long Riesz representers, respectively.

\Cref{eqn:ovb} provides intuition for how a representation can induce bias. The omitted variable bias is a covariance of two differences: the first term is the difference in expected wage as a function of history and the representation of history, while the second term is the difference in the group propensity odds ratio as a function of history and the representation of history. A low-dimensional representation by definition discards information; for there to be no omitted variable bias, the discarded information that's related to wage should be unrelated to group propensity, and vice-versa. \Cref{eqn:ovb} is closely related to econometric results about the extent of omitted variable bias in semiparametric models (see e.g. \citet{chernozhukov2022long}); while this literature focuses on whether individual variables are included or not in models, we focus on representations, which can still omit variables despite being functions of all variables. Focusing on representations, \citet{veitch2020adapting} provide a sufficient condition under which \Cref{eqn:ovb} is 0, but do not characterize the exact level of bias; meanwhile, \Cref{eqn:ovb} exactly characterizes the extent of omitted variable bias.

\subsection*{Debiasing foundation models}
Although CAREER is a foundation model that is trained to learn representations from data, it is not trained to minimize omitted variable bias (\Cref{eqn:ovb}). One reason is that its representations are trained to optimize a single objective that doesn't naturally appear in \Cref{eqn:ovb}: the predictability of an individual's next job. Moreover, the representations are trained on a different population of individuals than those for whom we'd like to estimate the unexplained wage gap. 

Even biased, a foundation model can still be useful for estimating the unexplained wage gap because it can be fine-tuned. Empirically, when a foundation model's representations are adjusted to optimize a related but distinct objective from the one they were initially trained to optimize, they often outperform models trained on only the new objective \citep{devlin2018bert,lewis2019bart}. We do not have to learn unbiased representations of career history from scratch; we can adjust the representations of a pretrained foundation model to debias it. 

The standard approach for modifying foundation models is \textbf{supervised fine-tuning} \citep{devlin2018bert}. In our setting, supervised fine-tuning would entail modifying a foundation model's representation $\lambda$ to be predictive of wage on the survey data used for wage gap estimation. But while a foundation model would likely form better wage predictions after supervised fine-tuning, it can still be biased for estimating the unexplained wage gap; unless the foundation model recovers the exact relationship between labor market history and wage, supervised fine-tuning can still introduce arbitrarily large omitted variable bias. 

We now describe a set of conditions for fine-tuning under which an estimator of a wage gap that conditions on representations derived from a foundation model is not only unbiased and consistent but also converges at a rate proportional to $n^{-1/2}$:

\begin{theorem}
\label{eqn:theorem_ovb}
Consider a sequence of wage models $\hat \mu_{n,0}:\R^D \to \R$, propensity models $\hat e_n:\R^D \to (0, 1)$, and representations $\lambda_n: \mathcal X \to \R^D$. Denote by $\psi$ the true wage gap unexplained by history and by $\hat \psi_n$ the representation-based Augmented Inverse Probability Weighted (AIPW) estimator of the unexplained wage gap from $n$ i.i.d. samples $(X_i, A_i, Y_i) \sim P$:
\begin{equation}    
    \label{eqn:aipw}
    \hat \psi_n = \textstyle \frac{1}{\sum_i A_i} \sum_i \left(A_i - \frac{(1-A_i)\hat e_n(\lambda_n(X_i))}{1-\hat e_n(\lambda_n(X_i))}\right)(Y_i - \hat \mu_{n,0}(\lambda_n(X_i)).
\end{equation}

Assume the following:
\begin{enumerate}

\item Omitted variable bias (\Cref{eqn:ovb}) goes to 0 at a $\sqrt{n}$-rate:
\begin{equation*}
\text{OVB}(\lambda_n) = o_P(n^{-1/2}).
\end{equation*}

\item Combined $\sqrt{n}$-consistency of wage/propensity models \emph{as a function of the representation}:
\begin{align*}
&\|\hat e_n(\lambda_n(X)) - e(\lambda_n(X))\| *\|\hat \mu_{n,0}(\lambda_n(X)) - \mu_0(\lambda_n(X))\| = o_P(n^{-1/2}).
\end{align*}

\item The representations $\lambda_n$ converge to a representation $\lambda^*$ in the sense that
\begin{equation*}
    \textstyle \frac{1}{n}\sum_i (\varphi_{\lambda_n}(X_i, A_i, Y_i; \psi_{\lambda_n}) - \varphi_{\lambda^*}(X_i, A_i, Y_i; \psi_{\lambda^*})) = o_P(n^{-1/2}),
\end{equation*}
with $\text{Var}(\varphi_{\lambda^*}(X,A,Y))< \infty$, where $\varphi_{\lambda}$ is the representation-based influence function:
\begin{align*}
\textstyle \varphi_{\lambda}(X, A, Y; \psi_\lambda) &= \textstyle \frac{1}{P(A=1)}\bigg[\left(A-\frac{(1-A)e(\lambda(X))}{1-e(\lambda(X))}\right) * (Y-\mu_0)(\lambda(X)) - A \psi_\lambda\bigg]
\end{align*}
and $\psi_\lambda$ is the true gap unexplained by a representation $\lambda$
\begin{equation*}
    \psi_\lambda = \E_{P}[\mu_1(\lambda(X)) - \mu_0(\lambda(X))].
\end{equation*}

\item Additional assumptions in \Cref{app:proof_of_theorem}: cross-fitting ($\hat \mu_n$, $\hat e_n$, and $\lambda_n$ are estimated on a different sample than those used to construct $\hat \psi$); consistency of wage and propensity models as functions of the representations; strict overlap; and boundedness of wage model errors.

\end{enumerate}
Then, 
\begin{equation*}
\label{eqn:consistency_representation}
\textstyle \sqrt{n}(\hat \psi_n - \psi) \to \N\left(0, \text{Var}(\varphi_{\lambda^*}(X, A, Y; \psi))\right).
\end{equation*}

\end{theorem}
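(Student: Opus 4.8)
The plan is to run the debiased/double machine learning argument of \citet{chernozhukov2018double}, but with the data-dependent representation $\lambda_n$ treated as one more nuisance that cross-fitting renders effectively fixed, so that all representation-induced drift can be parked in Assumptions~1 and~3. Write $p=P(A=1)$ and let $g(X,A,Y;\lambda,e,\mu_0,\psi)$ denote the AIPW moment whose value at the true nuisances (viewed as functions of $\lambda$) and at the true $\lambda$-target $\psi_\lambda:=\E_P[\mu_1(\lambda(X))-\mu_0(\lambda(X))]=\text{UWG}(\lambda)$ equals the influence function $\varphi_\lambda$ of the statement. Since $\hat\psi_n$ is an explicit sample average, a one-line Slutsky step (using $\tfrac1n\sum_i A_i\to p$) gives $\sqrt n(\hat\psi_n-\psi_{\lambda_n})=(1+o_P(1))\,\tfrac1{\sqrt n}\sum_i \hat g_i$ with $\hat g_i:=g(X_i,A_i,Y_i;\lambda_n,\hat e_n,\hat\mu_{n,0},\psi_{\lambda_n})$; and because $\psi_{\lambda_n}-\psi=\text{UWG}(\lambda_n)-\text{UWG}=\text{OVB}(\lambda_n)=o_P(n^{-1/2})$ by Assumption~1, it suffices to prove $\tfrac1{\sqrt n}\sum_i\hat g_i=\tfrac1{\sqrt n}\sum_i\varphi_{\lambda^*}(X_i,A_i,Y_i;\psi)+o_P(1)$ and then invoke a CLT.

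Next I would split $\tfrac1{\sqrt n}\sum_i \hat g_i$ (working fold by fold over the fixed number of cross-fitting folds) as $\mathrm{(I)}+\mathrm{(II)}+\mathrm{(III)}$: the ``oracle-at-$\lambda_n$'' term $\mathrm{(I)}=\tfrac1{\sqrt n}\sum_i\varphi_{\lambda_n}(X_i,A_i,Y_i;\psi_{\lambda_n})$; the centered empirical-process remainder $\mathrm{(II)}$, namely $\hat g_i-\varphi_{\lambda_n}(\cdots)$ minus its conditional mean given the out-of-fold sample on which $(\hat e_n,\hat\mu_{n,0},\lambda_n)$ were fit; and the nuisance-bias term $\mathrm{(III)}=\sqrt n$ times that conditional mean. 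Conditionally on the out-of-fold sample the summands of $\mathrm{(II)}$ are i.i.d.\ mean zero with conditional variance bounded by the conditional second moment of $\hat g_i-\varphi_{\lambda_n}(\cdots)$, which vanishes by $L^2$-consistency of $\hat e_n(\lambda_n(\cdot))$ and $\hat\mu_{n,0}(\lambda_n(\cdot))$ for $e(\lambda_n(\cdot))$ and $\mu_0(\lambda_n(\cdot))$, strict overlap, and boundedness of the wage-model errors (Assumption~4); a conditional Chebyshev bound then gives $\mathrm{(II)}=o_P(1)$. For $\mathrm{(III)}$ I would use that the AIPW moment for this ATT-type estimand is doubly robust and Neyman-orthogonal in $(e,\mu_0)$: since $\varphi_{\lambda_n}(\cdots)$ integrates to zero, the conditional mean of $\hat g_i$ is the remainder of a second-order expansion of $\E[g]$ about the true nuisances---the first-order terms vanish by orthogonality---which strict overlap bounds by a constant times $\|\hat e_n(\lambda_n(X))-e(\lambda_n(X))\|\,\|\hat\mu_{n,0}(\lambda_n(X))-\mu_0(\lambda_n(X))\|=o_P(n^{-1/2})$ (Assumption~2); hence $\mathrm{(III)}=o_P(1)$.

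It remains to move the oracle term from $\lambda_n$ to $\lambda^*$. Assumption~3 is exactly the statement that $\tfrac1n\sum_i(\varphi_{\lambda_n}(X_i,A_i,Y_i;\psi_{\lambda_n})-\varphi_{\lambda^*}(X_i,A_i,Y_i;\psi_{\lambda^*}))=o_P(n^{-1/2})$, so $\mathrm{(I)}=\tfrac1{\sqrt n}\sum_i\varphi_{\lambda^*}(X_i,A_i,Y_i;\psi_{\lambda^*})+o_P(1)$; since $\varphi_\lambda(\cdot;\psi)$ is affine in $\psi$ with coefficient $-A/p$ and $\psi_{\lambda^*}=\psi$ (so that $\text{OVB}(\lambda^*)=0$ and the centering is right), one may replace $\psi_{\lambda^*}$ by $\psi$ at the cost of another $o_P(1)$. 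Collecting terms, $\sqrt n(\hat\psi_n-\psi)=\tfrac1{\sqrt n}\sum_i\varphi_{\lambda^*}(X_i,A_i,Y_i;\psi)+o_P(1)$, where the summands are i.i.d., mean zero (as $\E[\varphi_{\lambda^*}(\cdot;\psi_{\lambda^*})]=0$), with finite variance by Assumption~3; the Lindeberg--L\'evy CLT together with Slutsky's theorem then delivers $\sqrt n(\hat\psi_n-\psi)\to\N(0,\text{Var}(\varphi_{\lambda^*}(X,A,Y;\psi)))$.

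The main obstacle I anticipate is term $\mathrm{(III)}$: verifying Neyman orthogonality of the \emph{representation-based} AIPW moment and bounding its second-order remainder, because there the propensity- and outcome-model perturbations are composed with a data-dependent $\lambda_n$ and the plugged-in target $\psi_{\lambda_n}$ itself drifts with $n$. Conditioning on the cross-fitting auxiliary sample is what makes this tractable---$\lambda_n$ may then be treated as fixed inside the conditional expectation, so the classical orthogonality computation applies verbatim---but it forces any residual dependence on \emph{how} $\lambda_n$ approaches $\lambda^*$ to be absorbed into Assumptions~1 and~3, which is precisely why those hypotheses are phrased in terms of $\text{OVB}(\lambda_n)$ and of averages of $\varphi_{\lambda_n}-\varphi_{\lambda^*}$ rather than as a rate of convergence for $\lambda_n$ itself.
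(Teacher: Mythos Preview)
Your proposal is correct and follows essentially the same route as the paper's proof: the paper also splits $\hat\psi-\psi$ into the OVB term $\psi_{\lambda_n}-\psi$ (handled by Assumption~1) and the estimation error $\hat\psi-\psi_{\lambda_n}$, then decomposes the latter via a von~Mises expansion into an oracle term $S^*$ (your (I), moved to $\lambda^*$ by Assumption~3), an empirical-process remainder $T_1$ (your (II), controlled by cross-fitting plus Chebyshev), and a second-order remainder $T_2$ (your (III), bounded by the product rate in Assumption~2). The only cosmetic differences are that the paper carries out the argument for the ATE functional and then asserts the ATT case follows, whereas you handle the $\sum_i A_i$ normalization directly via Slutsky, and that the paper writes out the second-order remainder $R_2$ explicitly rather than invoking Neyman orthogonality by name.
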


The first condition is about omitted variable bias: it requires that the omitted variable bias of the representations converges to 0 at a rate proportional to $n^{-1/2}$. 
This will be trivial for some representations: for example, $\lambda(X) = X$ has no omitted variable bias by definition. However, the second assumption imposes restrictions about modeling wage and group membership: these models must approximate the true relationship between the representation of history and these outcomes such that error goes to zero at a combined root-n rate. Note that these modeling assumptions are \textit{with respect to the representation}: the true relationships between history and the outcomes do not need to be reconstructed, only those between the representation and the outcome. Therefore, satisfying the first two assumptions involves striking a balance: representations must be detailed enough to not have omitted variable bias, but also low-dimensional enough so that outcomes can be efficiently estimated as a function of the representation. \Cref{app:proof_of_theorem} contains more details and a proof.

Although \Cref{eqn:theorem_ovb} is stated in terms of the size $n$ of the single dataset used for fine-tuning, note that much larger datasets are typically used to pretrain the foundation model. In practice, training a high-dimensional representation $\lambda(X)$ from scratch on moderate-sized survey dataset would be intractable; however, the fact that a much larger dataset contributes to the initial estimation of $\lambda(X)$ suggests that it may be feasible to adequately control omitted variable bias with a larger dimensional representation $\lambda(X)$ than would be possible without the foundation model. Indeed, under a repeated-sampling framework in which both the pretraining and fine-tuning samples are repeatedly resampled, we expect most of the sampling variation to arise from the smaller fine-tuning sample rather than from the larger pretraining sample used to train the foundation model. Consequently, our formal results condition on the pretrained representation and focus on the sampling variation arising from the fine-tuning sample.

Empirical evidence in domains such as computer vision, natural language processing, and protein structure suggests that such large-scale pretraining often yields robust and transferable representations for predictions \citep{bommasani2021opportunities}. At the same time, we do not require the foundation model to deliver a perfect or ``true'' representation; we instead require that fine-tuning from the pretrained representation allows for omitted variable to go to 0 at a $\sqrt{n}$ rate. A general-purpose foundation model may still omit some information relevant for a particular downstream task, and it is precisely the role of subsequent fine-tuning to adapt the representation to the problem at hand.

\subsection*{Relationship to causal methods}
\Cref{eqn:theorem_ovb} relates to results from the causal inference literature \citep{robins1997toward,tsiatis2006semiparametric,athey2016efficient,chernozhukov2018double,kennedy2022semiparametric}. Although the unexplained wage gap is not a causal quantity, it is mathematically identical to an average treatment effect on the treated (ATT). Specifically, Assumption 2 is similar to assumptions for $\sqrt{n}$-consistency in the doubly-robust/double machine learning literature, which often assume $o(n^{-1/2})$ combined error of outcome and propensity models \citep{chernozhukov2018double,kennedy2022semiparametric}:
\begin{equation}
    \label{eqn:double_ml_assumption}
    \|\hat e_n(X) - e(X) \| * \|\hat \mu_{n,0}(X) - \mu_0(X)\| = o_P(n^{-1/2}).
\end{equation}
This is similar to Assumption 2, with one key difference: \Cref{eqn:double_ml_assumption} requires $o(n^{-1/2})$ combined error as a function of the \textit{full history}, while Assumption 2 only requires $o(n^{-1/2})$ combined error \textit{as a function of the representation} $\lambda_n(X)$. When $\lambda_n(X)$ is lower-dimensional than the full history $X$, Assumption 2 will be more realistic than \Cref{eqn:double_ml_assumption}. In fact, when $\lambda_n(X) = X$, Assumptions 2 and 3 are trivially satisfied and our result reduces to the standard double-robustness result \citep{chernozhukov2018double}. Assumption 2 lifts the requirement for full combined error to go to 0 while Assumption 1 imposes restrictions on what constitutes a valid representation. 

\Cref{eqn:theorem_ovb} relates to results from the variable selection literature in causal inference \citep{belloni2014high,shortreed2017outcome,tang2023ultra,cho2023variable}. This literature is motivated by a classic result: to make valid causal inferences, it is sufficient to condition only on variables that affect both treatment assignment and outcome \citep{rosenbaum1983central}. Like Theorem 1, the results in this literature do not necessarily assume that the full outcome or propensity model can be consistently estimated as a function of the full set of covariates. While this literature proposes techniques when individual variables are shared in outcome and treatment models, these techniques do not apply when there is more complicated shared structure; for example, the number of years spent in a blue collar job may affect both treatment and outcome, but this is a transformation rather than a single variable. In contrast, our method is based on \textit{representations}, or potentially complicated functions of variables, rather than individual variables. A set of selected variables is an example of a representation; but representations can be more complex than a set of variables constructed by a researcher. 

Other methods from the causal inference and econometrics literature have also proposed using representations or latent variables from machine learning models. 
For example, \citet{battaglia2024inference} demonstrate that latent variables from a machine learning model should be jointly optimized with the econometric outcome of interest rather than first estimated separately and then plugged into an econometric model.
Related to our method, \citet{veitch2020adapting} provide a sufficient condition under which a representation is unbiased for estimating a causal effect, which motivates empirical methods used by \citet{shi2019adapting} and \citet{chernozhukov2022riesznet} (and is the basis of the multi-task debiased fine-tuning objective we consider). In contrast, we provide an if-and-only-if condition under which there is no bias, and we characterize the exact level of bias with a connection to omitted variable bias \citep{chernozhukov2022long}. Additionally, we provide conditions about the level of omitted variable bias under which estimation is $\sqrt{n}$-consistent and asymptotically normal. 

A strand of literature in supervised machine learning has also focused on integrating ideas from the causal inference literature into predictive methods in order to improve the properties of predictive models, such as stability (see \citet{cui2022stable} for a review of this literature). Similar to the approach in our paper, these methods adjust the training of a predictive model to avoid regularization-induced omitted variable bias, but the literature on stable prediction considers reducing such bias for many covariates simultaneously in a cross-sectional prediction problem. For example, some such methods reweight data to reduce the correlation among features.

\subsection*{Debiased Fine-Tuning}

The exact level of omitted variable bias (\Cref{eqn:ovb}) cannot be computed from data; it involves calculating the same high-dimensional function the representation is meant to approximate, $\mu_A(X)$. However, even if the bias cannot be computed exactly, we can still learn representations that are targeted to minimize it. Below, we develop three fine-tuning methods for minimizing this bias. Each method addresses omitted variable bias from a distinct angle. In principle, the optimal approach may vary across applications. For instance, multi-task fine-tuning is straightforward to implement but can require tuning an extra hyperparameter; projection fine-tuning removes that hyperparameter but may converge more slowly; difference-based fine-tuning can capture group disparities more directly but does not deliver a direct wage or propensity predictor. To choose among them, we recommend using validation metrics such as the R-Learner metric \citep{nie2021quasi} described (and subsequently used for model selection) in \Cref{sec:empirical_validation}.

\subsubsection*{Multi-Task Fine-Tuning}
The expression for omitted variable bias recalls a classic result from causal inference: to make valid causal inferences, it is sufficient to condition only on variables that affect both treatment assignment and outcome \citep{rosenbaum1983central}. Thus, if a representation captures all the features of history that are predictive of \textit{both} wage and group membership, it will result in zero omitted variable bias. While a representation that perfectly captures all of the variables related to wage (or equivalently to group membership) will result in zero omitted variable bias, it might be more realistic to capture the potentially smaller set of variables that affects both wage and group membership.

We therefore fine-tune the representations to be predictive of both wage and group membership. We consider two approaches. In \textbf{Multi-Task Fine-Tuning}, we use the method proposed by \citet{veitch2020adapting} and \citet{shi2019adapting} to learn a representation that jointly minimizes wage and group membership predictive errors:
\begin{align*}
    \textstyle \hat \lambda, \hat \mu, \hat e &= \textstyle \arg\min_{\lambda, \mu, \lambda} \big\{\sum_{i=1}^N \ell_Y[Y_i, \mu_{A_i}(\lambda(X_i))] + \beta*\ell_A[A_i, e(\lambda(X_i))]\big\},
\end{align*}
where $\beta\in\R^+$ is a hyperparameter, $\ell_Y$ is the mean squared-error loss, and $\ell_A$ is the binary cross-entropy loss. We also consider a similar approach, \textbf{Projection Fine-Tuning}, that alternates between losses: $\lambda$ is optimized to minimize mean-squared error loss until convergence, then $\lambda$ is optimized to minimize binary cross-entropy loss until convergence, and this process repeats until the procedure converges. This procedure is based on projected gradient descent \citep{calamai1987projected} and does not require choosing a hyperparameter $\beta$. See \Cref{sec:app:model_and_training_details} for more details.

\subsubsection*{Difference-Based Fine-Tuning}
An alternative debiased fine-tuning method is motivated by the heterogeneous treatment effect literature \citep{athey2016recursive,wager2018estimation,nie2021quasi}, where the goal is to model the difference in outcome functions for each group rather than each group's function individually. If the difference in these functions is simpler than each individual function, a method that's targeted to capture this difference will be more effective than modeling each function separately.

Thus, we propose a method for fine-tuning foundation models meant to capture group differences. This method is based on the R-learner approach for estimating heterogeneous treatment effects in causal inference \citep{robinson1988root,nie2021quasi}. It is based on the observation that group differences can be written as the solution to an objective:
\begin{equation}
\begin{split}
\arg\min_{\tau: \mathcal X \to \R} &\E\left\{\left[(Y - m(X)) - (A-e(X))\tau(X)\right]^2\right\} = \mu_1(X) - \mu_0(X),
\label{eqn:r_learner_objective}
\end{split}
\end{equation}
where $m(x) = \E[Y|X=x]$ is the conditional wage function \textit{averaged} over the two groups. 

The \textbf{Difference-Based Fine Tuning} procedure begins by estimating the conditional wage function $\hat m(\lambda_m(x))$ and the propensity function $\hat e(\lambda_e(x))$ using supervised fine-tuning. 
Starting with the wage, we estimate a function $\hat m$ and fine-tuned representation $\lambda_m$ in order to minimize the squared loss in predicting the wage $Y$. We estimate $\hat e$ and $\lambda_e$ analogously. Treating these functions and their respective representations as fixed, we then fine-tune a new representation to minimize the squared error:
\begin{equation*}
\hat \lambda, \hat \rho = \arg\min_{\lambda, \rho} \E\left\{\left[(Y - \hat m(\lambda_m(X))) - (A-\hat e(\lambda_e(X)))\rho( \lambda(X))\right]^2\right\},
\end{equation*}
where $\rho: \R^D \to \R$ is a flexible function; in the empirical studies we use a two-layer feed-forward neural network. The unexplained wage gap is then estimated as
\begin{equation}
\label{eqn:r_learner_outcome_model}
\textstyle \frac{1}{N_1} \sum_i A_i * \hat \rho(\hat\lambda(X_i)),
\end{equation}
where $N_1 = \sum_i A_i$. We refer to this method as \textit{difference-based fine-tuning}. 
This approach is based on the R-learner approach, but is based on representations: separate representations are used for the wage, propensity, and wage-difference models, so that the representation used for the last model is optimized to only capture the differences in groups. 
See \Cref{sec:app:model_and_training_details} for more details. 

This optimization procedure encourages a representation that captures differences in group wages. The true relationship between wage and labor market history may be complicated for both groups. However, if the \textit{difference} in relationships is not as complicated, it will be easier to learn a representation that captures the difference. 

\subsection*{Implementation details of fine-tuning} In practice, we implement each fine-tuning method by optimizing the respective objective using Adam \citep{kingma2014adam}. Each model is initialized at the foundation model's parameters and all parameters are re-optimized with respect to the new objective. The same neural architecture from pretraining (e.g. the transformer layers of CAREER) is retained, but we typically add extra parameters on top of the representation for our downstream tasks (e.g. predicting wages or propensities) and jointly update all parameters during fine-tuning. These steps enable the representation to adapt to the new objectives while preserving its broad, pretrained knowledge. See \Cref{sec:app:model_and_training_details} for more details. \section{Data}
\label{sec:data}
Our empirical analysis uses data from one of the leading U.S. administrative surveys, the Panel Study of Income Dynamics \citep{psid}, or PSID. PSID is a longitudinal survey that has followed a cohort of American families since 1968. It is constructed to be nationally representative and is frequently used to estimate unexplained wage gaps \citep{blau_us_2006,blau_gender_2017}. Because the same individuals are interviewed over the course of the survey, labor market histories can be constructed by tracking the trajectory of reported occupations each year an individual is in the survey. 

We encode occupations into one of 330 ``occ1990dd'' occupational categories \citep{david2013growth}. Since the PSID includes information about individuals who are not working, we add seven categories for when an individual's occupation is not listed but their employment status is available (e.g. employed, laid off). 

Following \citet{blau_gender_2017}, we restrict our sample to the surveys conducted between 1990-2019, consisting of 91,391 observations over 19 surveys, and further restrict our sample to non-farm and non-military wage and salary workers between 25 and 64 years old who worked for at least 26 weeks in non-farm jobs. We incorporate longitudinal sample weights into our analysis, which are designed to adjust for differences in the probability of selection into the sample. \Cref{app:data_preprocessing} contains more details about how we construct the dataset.

\section{Semi-Synthetic Experiments}
\label{sec:empirical_validation}
We first validate our proposed methods for debiased fine-tuning. 
Typically, machine learning models are evaluated using measures of predictive accuracy on held-out data.  However, our ultimate goal isn't forming more accurate wage predictions, but rather more accurate estimates of the unexplained wage gap. To this end, we turn to semi-synthetic experiments, a common method to assess causal estimation strategies in a controlled setting (e.g. \citet{athey2021using}). Semi-synthetic experiments allow us to evaluate the performance of different methods because the data generating process is known.

In order for the semi-synthetic experiments to reflect real-world data, we use real labor market histories from the PSID sample (Section \ref{sec:data}). Our semi-synthetic experiments are based on forming a ``ground-truth'' representation of the history and then generating group labels and wages as a function of this history. To mimic the fact that nature is often more complicated than the model we might select, we use a more complicated representation to generate data than the one used by the foundation model to estimate wage gaps. Specifically, we use a transformer architecture that is 20 times larger than the one used for estimation to simulate a ``ground-truth'' representation $\lambda^*: \mathcal X \to \R^D$. For each setting, we simulate group labels and wages as a function of the representation $\lambda^*$. We control how much of the representation is shared between these functions by introducing binary variables $\mathbf{u}, \mathbf{v} \in \{0, 1\}^D$ that mask the dimensions of the representation used for the group and wage models, accordingly. We then simulate from the following model:
\begin{align*}
    A_i &\sim \text{Bern}\left(\sigma\left(\textstyle \sum_j u_j\beta_j* \lambda(X_i)_j\right)\right)\\
    Y_i &= \textstyle \tau A_i + \sum_j v_j \beta_j  * \lambda(X_i)_j + \epsilon_i,
\end{align*}
where $\beta \in \R^D$ is a random vector of regression coefficients, $\tau \in \R$ is the true unexplained gap, $\epsilon_i \sim \N(0, \sigma^2)$ is the outcome noise, and $\sigma(\cdot)$ is the inverse-logit function. In this setup, $\mathbf{u}^\top \mathbf{v}/D$ is the proportion of the representation that is shared. For each experiment, we control the shared proportion ($\mathbf{u}^\top \mathbf{v}/D$), the true gap ($\tau$), and the level of outcome noise $(\sigma^2)$. We consider 27 different settings, and perform multiple samples in each setting by resampling $A_i$, $\epsilon_i$, $\mathbf{u}$, $\mathbf{v}$, and $\beta$. 
See \Cref{app:semi_synthetic_details} for more details. 

We compare four methods for estimating the unexplained wage gap from semi-synthetic data. Our baseline is \emph{Supervised Fine-Tuning} \citep{devlin2018bert}: fine-tuning a foundation model to predict wage without an explicit debiasing objective. We compare this baseline to the three debiasing methods described above: Multi-Task Fine-Tuning, Projection Fine-Tuning, and Difference-Based Fine-Tuning. We perform 400 simulations, re-generating data and re-training each method for each simulation. Here we estimate the unexplained gap using the outcome-only estimator (analogous to \Cref{eqn:outcome_only_model}), while \Cref{app:semi_synthetic_details} shows results for the AIPW estimator (\Cref{eqn:aipw}), which we find to perform slightly worse in practice. 
Although AIPW is theoretically advantageous in large samples  \citep{robins1997toward,tsiatis2006semiparametric}, it can suffer from higher variance when propensity models are difficult to estimate, thereby degrading its finite-sample performance \citep{kang2007demystifying}.

\begin{figure*}
\centering
\includegraphics[width=1.0\linewidth]{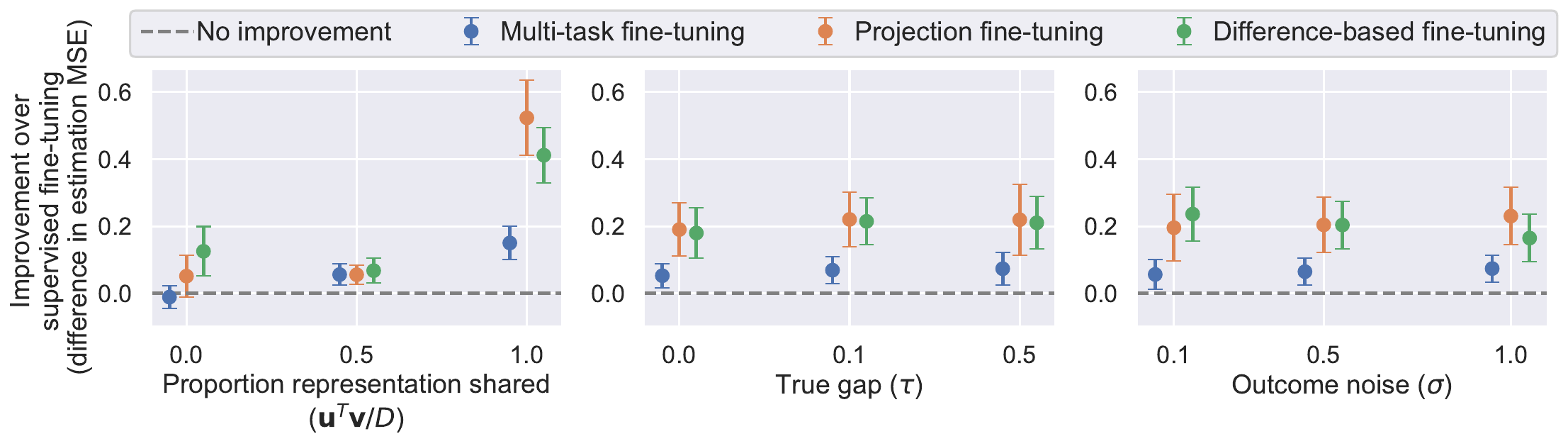}
\caption{
Debiased fine-tuning methods are better at estimating the unexplained wage gap than standard supervised fine-tuning \citep{devlin2018bert} across 270 semi-synthetic experiments. For each semi-synthetic experiment, the true unexplained wage gap is known, and each method provides a different estimate of this gap. This figure compares each method's average error for estimating this gap, evaluated via MSE between the true and estimated unexplained gap. Specifically, the Y-axis compares each method's estimation error to the error from estimates derived from a model using standard supervised fine-tuning (larger values on the Y-axis correspond to larger improvements). Bars represent 95\% confidence intervals. 
}
\label{fig:semi_synthetic_experiments}
\end{figure*}

\Cref{fig:semi_synthetic_experiments} compares the MSE of the estimate of the gender wage gap (relative to the oracle gender wage gap) derived from alternative estimation approaches. All three methods for debiasing foundation models consistently outperform the standard supervised fine-tuning approach. The advantage of debiasing is largest when more of the representation is shared across the wage model and group labels. This reinforces the motivation behind representation learning; as there is more shared structure in how group labels and wages relate to history, sharing representations can improve estimates. 
Projection fine-tuning and difference-based fine-tuning are both more successful than multi-task fine-tuning, especially when more of the representation is shared. The full set of results is in \Cref{tab:app:all_semi_synthetic_results}.

How should we validate models on real-world data? While wage and gender predictive metrics are important, they do not directly assess estimation quality. While matching-based methods \citep{stuart2010matching} can also be used to validate estimation in principle, these require low-dimensional covariates. 
We instead consider another validation metric, inspired by the R-Learner objective in \Cref{eqn:r_learner_objective}. 
Because \Cref{eqn:r_learner_objective} is minimized when $\tau(X)$ is the true expected difference female and male wages, we evaluate \Cref{eqn:r_learner_objective} given a model's estimate of $\hat \tau(X) = \hat \mu_1(X) - \hat \mu_0(X)$, 
\begin{equation}
    \textstyle \frac{1}{n}\sum_{i=1}^n [(Y_i - \hat m(X_i)) - (A_i - \hat e(X_i)) \hat \tau(X_i)]^2,
    \label{eqn:r_learner_eval_metric}
\end{equation}
where $i=1, \dots, n$ index $n$ held-out samples and $\hat m(X_i)$ and $\hat e(X_i)$ are models trained to predict wage and gender, respectively, using supervised fine-tuning of CAREER. We refer to \Cref{eqn:r_learner_eval_metric} as the \textbf{R-Learner Metric}. 
We note that because this metric relies on estimates of wage and propensity models, it is sensitive to the specification of $\hat m$ and $\hat e$. However, we validate this metric on semi-synthetic data, finding it to be a useful proxy for model performance (see \Cref{app:fig:semi_synthetic_evaluation_metrics}).

 \section{Empirical Application}

We now apply our methods to the (actual) PSID data.  We begin by evaluating the quality of wage predictions derived from alternative models, including the standard econometric models from the wage gap literature \citep{blau_gender_2017}, since predictive accuracy can be easily evaluated using held-out test data. We show that foundation-based representations substantially improve predictive performance relative to standard regression-based econometric models, suggesting that our methods can capture variables that have the potential to cause omitted variable bias when estimating unexplained wage gaps. We then directly demonstrate that representations derived from our methods capture elements of history that are predictive of \emph{both} wage and gender, so that they indeed meet the criteria for omitted variable bias, and that these are quantitatively important for explaining wage gaps.

\subsection{Predictive accuracy}
\label{sec:predictivePSID}
As baselines, we consider econometric models that use hand-constructed summaries of an individual's career but not their full history to predict wage. Following the econometric literature, we consider two linear models: regression (fit with OLS) and LASSO. Given covariates $Z_i \in \R^P$, these models estimate the wage function as 
\begin{equation}
    \label{eqn:linear_model}
    \hat \mu_A(Z_i) = \theta_A + \beta_A^\top Z_i,
\end{equation}
for $A \in \{0, 1\}$, an intercept $\theta_A \in \R$, and regression coefficients $\beta_A \in \R^P$. Following \citet{blau_gender_2017}, the covariates included in $Z$ are: years of full-time and part-time experience (and their squares), years of schooling, indicators for bachelors and advanced degrees, race and ethnicity indicators, gender indicators, census and region indicators, an indicator for collective bargaining coverage, 15 industry category indicators, and 21 occupation category indicators. We also consider two different methods of encoding occupations: \textbf{``coarse-grained''}, which uses the 21 coarse-grained occupational categories above, and \textbf{``fine-grained''}, with an additional 330 fine-grained occupational categories. 

We compare these models from the economics literature to predictions based on foundation models that use CAREER \citep{vafa2023career} to represent labor market history. CAREER is pretrained to learn representations of career trajectories on a dataset of 23.7 million resumes. 
We consider both supervised fine-tuning \citep{devlin2018bert} and debiased fine-tuning approaches to modify CAREER's representations. 
When we fine-tune CAREER, we use both its representations of history and the covariates $Z_i$ described above to predict an individual's wage; see \Cref{sec:app:model_and_training_details} for more details. In order to understand how different methods of including history affect predictions, we train two additional versions of CAREER: one that uses the neural network to encode an individual's current job but not their history (\text{``CAREER (current job only)''}), and one that includes an individual's current job but only their workforce participation status for previous jobs (e.g. ``unemployed'', ``out-of-labor force''), which we refer to as ``\text{CAREER (participation only)}''.

\begin{table*}
  \footnotesize
  \centering
  \caption{The CAREER foundation model, which incorporates an individual's full labor market history, forms better predictions of wage and gender on held-out data than standard econometric methods, which summarize history with low-dimensional statistics. Test-set bootstrapped standard errors are in parentheses.
  }
  \begin{tabular}{c l c c}
    \toprule
  & & Wage $R^2$ & Gender $R^2$ \\
  \midrule 
  \textbf{Regression models} & Coarse-grained regression & 0.428 (0.004) &  0.137 (0.002) \\
   & Coarse-grained LASSO & 0.428 (0.003) &  0.260 (0.004) \\
  & Fine-grained LASSO & 0.455 (0.003) &  0.314 (0.004) \\
  \midrule
  \textbf{Foundation models} & CAREER (no pretraining) & 0.462 (0.003) & 0.424 (0.005) \\
  \textbf{(supervised fine-tuning)} & CAREER (pretrained, current job only) & 0.454 (0.004) &  0.307 (0.003) \\
   & CAREER (pretrained, participation only) & 0.467 (0.003) &  0.309 (0.003) \\
  & CAREER (pretrained) & 0.515 (0.004) & 0.510 (0.003)\\
  \midrule 
  \textbf{Foundation models} & CAREER (multitask fine-tuning) & 0.468 (0.004) &  0.514 (0.004) \\
  \textbf{(debiased fine-tuning)} & CAREER (projection fine-tuning) & 0.503 (0.003) & 0.513 (0.004) \\
  \bottomrule
 \end{tabular}
  \label{tab:predictive_accuracy_full}
\end{table*}

Table \ref{tab:predictive_accuracy_full} shows the held-out $R^2$ for each model's wage and gender predictions. (Since gender isn't real-valued, we use pseudo $R^2$ based on negative log-likelihood.)
CAREER outperforms all the econometric baselines. 
With the standard supervised fine-tuning, CAREER has a held-out wage $R^2$ of 0.515 and held-out gender $R^2$ of $0.510$. Its predictive performance is not stemming from including a better functional form for an individual's current job or capturing employment spells more fully. 
For the two debiased fine-tuning methods, we find that wage $R^2$ slightly worsens, while gender predictions might slightly improve, although this improvement is within the standard errors. (We do not have wage or gender predictions for difference-based fine-tuning because it predicts the difference in group wages rather than individual wages or genders.)  
These results extend a finding from \citet{vafa2023career}, which also shows that transformer-based methods can improve wage predictions relative to econometric baselines. What \Cref{tab:predictive_accuracy_full} additionally shows is that gender predictions are improved by a larger margin and that these gains are still present for debiased fine-tuning methods. 
This result demonstrates another benefit of representation learning; if group membership and wage are correlated with similar transformations of input data, then learning representations that are predictive of both can improve predictions. 

\subsection{Analyzing the Gender Wage Gap}
\label{sec:gapanalysis}
We now compare gender wage gaps estimated with standard econometric techniques to those estimated with a foundation model's representations of labor market history, following the approaches described above.
For the econometric models, we use a linear regression using the same covariates described in \Cref{sec:predictivePSID}, encoding occupation into one of 21 coarse-grained labels.  
For the foundation models, we fine-tune CAREER using each of the three  debiased fine-tuning methods described in \Cref{sec:empirical_validation}. 
In addition to using the machine learned representations of history, these methods also incorporate the same hand-constructed covariates as the linear model. 
Because the unexplained wage gap is only identified when there is overlap (e.g. when there are workers with similar histories in both groups), we trim the study population. Specifically, we fine-tune CAREER with supervised fine-tuning to estimate a propensity model $\hat e(\lambda(X_i))$, and only include individuals $i$ such that $0.01 < \hat e(\lambda(X_i)) < 0.99$. 
We consider other trimming strategies in \Cref{app:sec:additional_tables}, finding similar results. We compute standard errors with bootstrapping, where the standard errors reflect sampling uncertainty conditional on the fine-tuned model. We do not re-train models for each bootstrap sample. Instead, we keep models fixed, evaluating each model on the bootstrapped sample; we refer to this as \textit{test-set bootstrapping}.

The results are summarized in \Cref{tab:wage_gaps}. 
Across all fine-tuning methods, full history explains more of the gender wage gap than hand-constructed summaries of history, which are typically used to explain gender wage gaps \citep{blau_trends_2013,goldin_grand_2014,blau_gender_2017}. While the wage ratio unexplained by summaries of history is 88.6\%, the ratio is above 90\% for all methods that use representations of full history, ranging from 90.4\% for projection fine-tuning to 93.4\% to difference-based fine-tuning. 
Note that these numbers vary a little depending on the trimming threshold and whether the AIPW estimator (\Cref{eqn:aipw}) is used instead of the outcome-only estimator; additional results are presented in \Cref{app:sec:additional_tables}. 
\Cref{tab:wage_gaps} also shows the R-Learner metric in \Cref{eqn:r_learner_eval_metric} for each model, which is minimized by difference-based fine-tuning. The last column of \Cref{tab:wage_gaps} shows that the difference-based fine-tuning method's R-Learner metric improvement over the the regression and multi-task fine-tuning improvement is significant at the 95\% level. 
Our analysis for the remainder of the paper considers results from the difference-based fine-tuning approach.

To further investigate where history is explaining the gender wage gap, we consider different cuts of the survey data. 
\Cref{app:fig:wage_ratios_over_time} shows the gender wage ratios over time. Compared to the methods that adjust the wage gap for summary statistics of history, the learned representations of history are explaining the least of the gap in 1990-1995 and the most of the gap in 2014-2019. 
Overall, we find that compared to methods that adjust the gap for summary statistics of history, full history explains more of the gap for later years. 
While this may reflect changes in the underlying wage dynamics between these two periods, it may also reflect the fact that more history is available in the later versions of the survey. To further investigate, \Cref{app:fig:wage_ratios_by_age} fixes the time period to 2014-2019 and considers different wage ratios by age. For the youngest workers, history explains the least of the gap; the history-explained ratio is almost the same as the unadjusted ratio for 25-34 year-old workers. However, history consistently explains more of the gap as workers become older. Further, \Cref{app:fig:wage_ratios_by_occupation} breaks down the adjusted wage gap by the occupational categories considered in \citet{blau_gender_2017}. Compared to the wage gap explained by summary statistics of history, the wage gap explained by representations of full history is smoother across occupational categories. The occupational category where history explains the most of the wage gap is in non-physician healthcare-related occupations; the category where it explains the least of the wage gap is for computer-related occupations.

\begin{table*}
  \footnotesize
  \centering
  \caption{Machine-learned representations of career history explain more of the gender wage gap than hand-constructed summary statistics typically used to estimate the gap. These representations are also validated by better performance on the R-Learner metric (\Cref{eqn:r_learner_eval_metric}). All results are estimated with cross-fitting on 5-folds, using the outcome-only estimator. Data is for PSID from 1990-2019 with 1\% clipping. Test-set bootstrapped standard errors are in parentheses.
  }
    \resizebox{\textwidth}{!}{
\begin{tabular}{l l c | c c}
& & & & \multicolumn{1}{c}{\textbf{R-Learner metric relative}} \\
& \textbf{Adjustment method} & \textbf{Wage ratio}  & \textbf{R-Learner metric} & \multicolumn{1}{c}{\textbf{to difference-based fine-tuning}}\\
\toprule
  Unadjusted & --- & 0.775 (0.004) & --- & --- \\
  \midrule
  Adjusted for summary statistics & Linear regression & 0.886 (0.001) &  0.1841 (0.0018)  &-0.00122 (0.00029) \\
  \midrule 
  Adjusted for full history & Multi-task fine-tuning & 0.907 (0.001)  & 0.1835 (0.0018) &  -0.00059 (0.00027) \\
  &  Projection fine-tuning & 0.904 (0.001) & 0.1833 (0.0018) & -0.00036 (0.00025) \\
  & Difference-based fine-tuning & 0.934 (0.001) &  0.1829 (0.0019) & --- \\
  \bottomrule
 \end{tabular}}
  \label{tab:wage_gaps}
\end{table*}

\begin{figure*}
\centering
\includegraphics[width=1.0\linewidth]{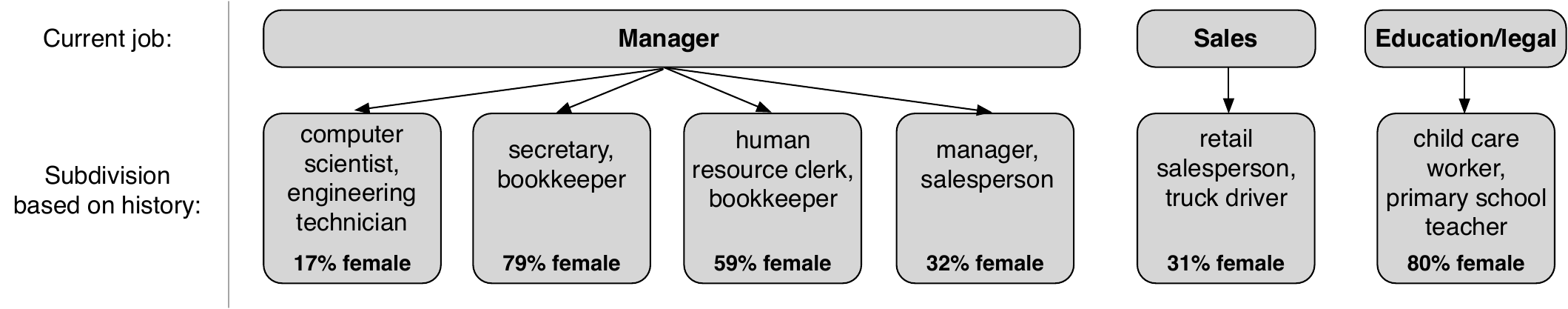}
\caption{CAREER finds omitted variables from a worker's job history that are important for explaining the gender wage gap. These omitted variables, which are identified by a regression tree as being most predictive of wage, are correlated with both wage and gender.
}
\label{fig:history_clusters}
\end{figure*}

The promise of incorporating more complete representations of history into wage gap analyses is that they can include variables that are typically omitted (recall from \Cref{sec:framework} that omitted variables include variables that are correlated with both wage and gender). This is not unique to machine learning methods; for example, prior studies have found that potential experience (an inexact measure of experience that does not measure workforce interruptions) explains less of the wage gap than years of actual work experience \citep{regan2009work,blau2013feasibility}, as do representations of history that do not include timing of labor force participation \citep{light1995early}. What machine learning  offers is the ability to automatically learn these variables without prespecifying them. 

Here, we investigate the aspects of history that are captured by the foundation model but omitted by traditional methods that summarize history with summary statistics. While our model learns representations of history for each individual, they are difficult to interpret directly because they are continuous. To better understand these representations of histories, we form clusters of histories. We constrain each history in a cluster to have the same current occupational category but allow the kinds of history in a cluster to vary. We then study which clusters are important for predicting wages by building a regression tree to predict wage from the clusters. The regression tree adds clusters one-at-a-time in the order that is most useful for minimizing wage error. This ordering allows us to identify and interpret which occupation patterns --- beyond an individual's current job --- are most predictive of wage.  

\Cref{fig:history_clusters} shows the six most predictive groups (see \Cref{app:sec:qualitative_details} for more details and \Cref{tab:app:all_15_top_clusters} for the top 15 groups).
These groups reveal important aspects of history that are omitted by hand-constructed summaries. For example, one of the most predictive types of histories consists of managers who were previously computer scientists and engineering technicians. This is a predominantly male group (83\% male), and managers with these jobs in their histories get paid more than managers without them. There are multiple interpretations for why managers with these histories are paid more than managers without them; perhaps these managers have different skills than other managers, or perhaps they are performing different jobs that are not captured by the occupational encoding scheme (e.g. there is no occupational category for engineering manager). In addition to this group, the other top groups that are important for wage predictions are also correlated with gender. 
Omitting these variables --- like standard econometric methods do --- induces omitted variable bias. \Cref{tab:app:fine_grained_omitted_variables} includes additional analyses of omitted variables found at the fine-grained occupational level. 

 \section{Discussion}

We used foundation models to study a classic problem from labor economics: 
estimating the difference between how individuals with the same labor market experience get paid when they belong to different groups. 
With foundation models, wage predictions improve over econometric baselines by up to 15\%. We also showed that an omitted variable bias arises when a foundation model discards relevant information about group differences. To mitigate this problem, we proposed procedures for debiasing foundation models, which we validated on semi-synthetic data. On survey data from the Panel Study of Income Dynamics, we found that labor market history explains more of the gender wage gap than the summary statistics of history used by standard econometric methods. 

These findings are suggestive of how to use foundation models in social science research. One direct application is using the debiasing methods we propose to estimate causal effects with foundation models. While we study a foundation model of labor market history trained on resume data, these methods can extend to analyses involving other foundation models. For example, foundation models trained on rich, nationwide administrative data can help answer a variety of descriptive and causal questions \citep{savcisens2024using}. 

Additionally, our methods can help address questions about the representativeness of large language models (LLMs), the most common type of foundation model. 
A recent literature has found that large language models, when queried to answer questions from surveys, do not respond in ways that are representative of the national population \citep{santurkar2023whose}. The problem of representative predictions and debiased foundation models are closely relate. Our methods could be adapted to improve the representativeness of LLMs. 

\paragraph{Acknowledgements.}
Keyon Vafa is supported by the Harvard Data Science Initiative. 
Susan Athey is supported by the Golub Capital Social Impact Lab, the Stanford Institute for Human-Centered Artificial Intelligence, and the Business, Governance, and Society Initiative at Stanford’s Graduate School of Business.
David M. Blei is supported by NSF IIS-2127869, NSF DMS-2311108, ONR N000142412243, and the Simons Foundation. 
\bibliography{labor}
\bibliographystyle{bibstyle}

\clearpage
\appendix

\section{OVB Derivation}
\label{app:ovb_derivation}
We now derive \Cref{eqn:ovb}. We write out the expression:
\begin{align}
    &\int (\mu_a(\lambda(x))-\mu_a(x))\left(\frac{1-a}{P(A=1)}\right)\left(\frac{e(\lambda(x))}{1-e(\lambda(x))}-\frac{e(x)}{1-e(x)}\right) P(x,a)dxda \nonumber \\
    &= \int (\mu_0(\lambda(x))-\mu_0(x))\left(\frac{P(A=0)}{P(A=1)}\right)\left(\frac{e(\lambda(x))}{1-e(\lambda(x))}-\frac{e(x)}{1-e(x)}\right) P(x|A=0)dx. \label{eqn:app:ovb}\\
\end{align}

First study the term involving $e(\lambda(x))$:
\begin{align}
    &\int (\mu_0(\lambda(x))-\mu_0(x))\left(\frac{P(A=0)}{P(A=1)}\right)\left(\frac{e(\lambda(x))}{1-e(\lambda(x))}\right) P(x|A=0)dx \nonumber \\
    &= \E\left[ (\mu_0(\lambda(X))-\mu_0(X))\left(\frac{P(A=0)}{P(A=1)}\right)\left(\frac{e(\lambda(X))}{1-e(\lambda(X))}\right) \bigg|  A=0\right]  \nonumber \\
    &= \E\left[\E\left[ (\mu_0(\lambda(X))-\mu_0(X))\left(\frac{P(A=0)}{P(A=1)}\right)\left(\frac{e(\lambda(X))}{1-e(\lambda(X))}\right) \bigg| \lambda(X), A=0\right] \bigg| A=0\right] \label{eqn:app:cond_exp}.
\end{align}
Note that
\begin{align*}
    \E[\mu_0(X)|\lambda(X), A=0] &= \E[\E[Y|X, A=0]|\lambda(X), A=0]\\
    &= \E[\E[Y|X,\lambda(X), A=0]|\lambda(X), A=0]\\
    &= \E[Y|\lambda(X), A=0]\\
    &= \mu_0(\lambda(X)),
\end{align*}
where the second equality is because $\lambda(X)$ is a deterministic function of $X$ and the third equality follows from the law of iterated expectation. Therefore, \Cref{eqn:app:cond_exp} reduces to
\begin{align*}
       &\E\left[\E\left[ (\mu_0(\lambda(X))-\mu_0(X))\left(\frac{P(A=0)}{P(A=1)}\right)\left(\frac{e(\lambda(X))}{1-e(\lambda(X))}\right) \bigg| \lambda(X), A=0\right] \bigg| A=0\right] \\
       &= \E\left[(\mu_0(\lambda(X))-\mu_0(\lambda(X)))\left(\frac{P(A=0)}{P(A=1)}\right)\left(\frac{e(\lambda(X))}{1-e(\lambda(X))}\right) \bigg|  A=0\right]\\
       &= 0
\end{align*}
Now study the remainder of \Cref{eqn:app:ovb}. 
\begin{align*}
    &\int (\mu_0(\lambda(x))-\mu_0(x))\left(\frac{P(A=0)}{P(A=1)}\right)\left(-\frac{e(x)}{1-e(x)}\right) P(x|A=0)dx \nonumber \\
   &= \int (\mu_0(x)-\mu_0(\lambda(x)))\left(\frac{P(A=0)}{P(A=1)}\right)\left(\frac{e(x)}{1-e(x)}\right) P(x|A=0)dx \nonumber \\
   &= \int (\mu_0(x)-\mu_0(\lambda(x)))\left(\frac{P(A=0)}{P(A=1)}\right)\left(\frac{P(A=1|x)}{P(A=0|x)}\right) P(x|A=0)dx \nonumber \\
   &= \int (\mu_0(x)-\mu_0(\lambda(x)))\left(\frac{P(A=0)}{P(A=1)}\right)\left(\frac{P(A=1,x)}{P(A=0,x)}\right) \frac{P(x,A=0)}{P(A=0)}dx \nonumber  \\
   &= \int (\mu_0(x)-\mu_0(\lambda(x))) P(x|A=1) dx \nonumber \\
   &= \int (\mu_1(\lambda(x))-\mu_0(\lambda(x))) P(x|A=1) dx - \int (\mu_1(x) - \mu_0(x)) P(x|A=1) dx \nonumber \\
   &= \text{OVB}(\lambda)
\end{align*}

\section{Proof of Theorem 1}
\label{app:proof_of_theorem}
Consider random variables $(X, A, Y) \sim P$, where $X \in \R^D, A \in \{0, 1\}$, and $Y \in \R$. We'll use $Z = (X, A, Y)$ as a shorthand to denote all the variables. 
A representation is a function: $\lambda: \R^D \to \R^K$. 
We'll use $P_n$ to denote the empirical measure and write sample averages as $P_n\{f(Z)\} = \frac{1}{n}\sum_i f(Z_i)$. In general we'll write expectations as $P\{f(Z)\} = \int f(z)dP(z)$. 

Consider a quantity mathematically identical to the average treatment effect (ATE), which we write out as a function of the distribution, $\psi(P)$:
\begin{equation}
\label{eqn:estimand}
\psi(P) = \E_{P}\{\E_{P}[Y|X, A=1] - \E_{P}[Y|X, A=0]\}.
\end{equation}
Theorem 1 is about a quantity analogous to the average treatment effect on the treated (ATT) rather than the ATE. However the proof with the ATT follows from the proof with the ATE so we write out the proof with the ATE for generality. 

Define the representation-based estimand: 
\begin{equation}
\label{eqn:representation_estimand}
\psi_\lambda(P) = \E_{P}\{\E_{P}[Y|\lambda(X), A=1] - \E_{P}[Y|\lambda(X), A=0]\}.
\end{equation}

Slightly abusing notation, define the following quantities:
\begin{alignat*}{2}
\mu_a(x) &= \E_P[Y|X=x,A=a], &\qquad \mu_a(\lambda(x)) &= \E_P[Y|X=\lambda(x), A=a] \\
e(x) &= P(A=1|X=x), &\qquad e(\lambda(x)) &= P(A=1|\lambda(X)=\lambda(x)) \\
\alpha_a(x) &= \textstyle \left(\frac{a}{e(x)}-\frac{1-a}{1-e(x)}\right), &\qquad \alpha_a(\lambda(x)) &= \textstyle \left(\frac{a}{e(\lambda(x))}-\frac{1-a}{1-e(\lambda(x))}\right).
\end{alignat*}

Define the \text{influence function} $\varphi_\lambda(Z;P)$ for a fixed representation $\lambda$:
\begin{equation*}
\varphi_\lambda(Z;P) = \textstyle \left(\frac{A}{e(\lambda(X))}-\frac{1-A}{1-e(\lambda(X))}\right)(Y-\mu_A(\lambda(X)))+ \mu_1(\lambda(X)) - \mu_0(\lambda(X)) - \psi_\lambda(P). 
\end{equation*}

Consider an estimator of $\psi(P)$ constructed from $n$ i.i.d. samples from $P$. This estimator relies on estimating a representation $\lambda_n$ and nuisance functions for $\mu_a(\lambda_n(x))$ and $e(\lambda_n(x))$ 
denoted by $\hat \mu_{a,n}(\lambda_n(x))$ and $\hat e_n(\lambda_n(x))$.
We'll use $\hat P_{n, \lambda_n}$ to refer to a probability distribution consistent with $\hat \mu_{n,a}(\lambda_n(x))$, $\hat e_n(\lambda_n(x))$, and the empirical measure over $\lambda_n(x)$. Then we define an estimator of the ATE as follows:
\begin{equation}
\hat \psi = \psi_{\lambda_n}(\hat P_{n,\lambda_n}) + P_n\{\varphi_{\lambda_n}(Z;\hat P_{n,\lambda_n})\}
\end{equation}
where by construction 
\begin{align*}
\psi_{\lambda_n}(\hat P_{n,\lambda_n}) = & \textstyle P_n\{\hat \mu_{n,1}(\lambda_n(X_i)) - \hat \mu_{n,0}(\lambda_n(X_i))\} = 
\frac{1}{n} \sum_{i=1}^n \left(\hat \mu_{n,1}(\lambda_n(x_i)) - \hat \mu_{n,0}(\lambda_n(x_i))\right)\\
\varphi_{\lambda}(Z; \hat P_{n,\lambda_n}) = & \textstyle \left(\frac{A}{\hat e_n(\lambda_n(X))}-\frac{1-A}{1-\hat e_n(\lambda_n(X))}\right)(Y-\hat\mu_{n,A}(\lambda_n(X))) \\
&+ \hat\mu_{n,1}(\lambda_n(X)) - \hat\mu_{n,0}(\lambda_n(X)) - \psi(\hat P_{n,\lambda_n}).
\end{align*}

\begin{theorem}
Assume the following:
\begin{enumerate}
  \item \textbf{Omitted variable bias goes to 0 at a root-n rate}:
  \begin{equation}
    \label{assumption:ovb}
    \text{Cov}_P\left(\mu_A(X) - \mu_A(\lambda_n(X)),\alpha_A(X) - \alpha_A(\lambda_n(X))\right) = o_P(n^{-1/2})
  \end{equation}
  \item \textbf{Combined root-n consistency as a function of representation}:
  \begin{equation}
    \label{assumption:combined_root_n_error}
    \|\hat e_n(\lambda_n(X)) - e(\lambda_n(X))\| \|\hat \mu_{n,a}(\lambda_n(X))-\mu_{n,a}(\lambda_n(X))\| = o_P(n^{-1/2}),
  \end{equation}
  for $a \in \{0, 1\}$. 
  \item \textbf{Cross-fitting}: $\hat \mu_n$, $\hat e_n$, and $\lambda_n$ are estimated on a different sample than the $n$ samples used to construct $\hat \psi$. 
  \item \textbf{Convergence of the representation}: There exists a representation $\lambda^*$ such that 
  \begin{equation}
    \label{assumption:convergence_of_representation}
    P_n\{\varphi_{\lambda_n}(Z;P) - \varphi_{\lambda^*}(Z;P)\} = o_P(n^{-1/2}),
  \end{equation}
  and $\text{Var}(\varphi_{\lambda^*}(Z;P)) < \infty$.
  \item \textbf{Consistency as a function of representations}:
  \begin{equation}
      \|\hat e_n(\lambda_n(X)) - e(\lambda_n(X))\| = o_P(1) \ \ \text{ and } \ \ \|\hat \mu_{n,a}(\lambda_{n,a}(X)) - \mu_{n,a}(\lambda_n(X))\| = o_P(1),
  \end{equation}
  for $a \in \{0, 1\}$. 
  \item \textbf{Strict overlap}: There exists a value $\epsilon$ with $0 < \epsilon < 0.50$ such that $\epsilon \leq e(X) \leq 1-\epsilon$, $\epsilon \leq e(\lambda_n(X)) \leq 1-\epsilon$, and $\epsilon \leq \hat e(\lambda_n(X)) \leq 1-\epsilon$ with probability 1. 
  \item \textbf{Boundedness}: There exists a constant $C < \infty$ such that $|Y-\hat \mu_{n,a}(\lambda_n(X))| \leq C$ for $a \in \{0, 1\}$ with probability 1.
\end{enumerate}
Then
\begin{equation}
\sqrt{n}(\hat \psi - \psi(P)) \to \N(0, \text{Var}(\varphi_{P_{\lambda^*}}(Z;P))).
\end{equation}
\end{theorem}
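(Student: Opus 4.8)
The plan is to run the standard von Mises / one-step expansion for doubly-robust estimators, but to carry the estimated representation $\lambda_n$ along as an extra nuisance object and to isolate the omitted-variable bias as one of the remainder terms. I would begin with the exact decomposition
\[
\hat \psi - \psi(P) = \underbrace{\big[\psi_{\lambda_n}(\hat P_{n,\lambda_n}) + P_n\{\varphi_{\lambda_n}(Z;\hat P_{n,\lambda_n})\} - \psi_{\lambda_n}(P)\big]}_{\text{(I): AIPW error for the fixed-}\lambda_n\text{ estimand}} + \underbrace{\big[\psi_{\lambda_n}(P) - \psi(P)\big]}_{\text{(II): omitted-variable bias}}.
\]
Term (II) is exactly the omitted-variable bias: repeating the calculation of \Cref{app:ovb_derivation} with the ATE estimand in place of the ATT shows $\psi_{\lambda_n}(P) - \psi(P) = \text{Cov}_P\!\left(\mu_A(X) - \mu_A(\lambda_n(X)),\, \alpha_A(X) - \alpha_A(\lambda_n(X))\right)$, which is $o_P(n^{-1/2})$ by Assumption 1.

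For term (I), condition throughout on the auxiliary sample on which $\lambda_n$, $\hat e_n$, and $\hat \mu_{n,a}$ were fit (Assumption 3), so that these are fixed and the $n$ estimation samples are i.i.d.\ from $P$ and independent of them. Using that $\varphi_\lambda(\cdot;P)$ is the Neyman-orthogonal efficient influence function for $\psi_\lambda$, the population identity $\psi_{\lambda_n}(\hat P_{n,\lambda_n}) - \psi_{\lambda_n}(P) + P\{\varphi_{\lambda_n}(Z;\hat P_{n,\lambda_n})\} = R_n$ holds, where $R_n$ is the usual second-order mixed-bias remainder. Hence
\[
\text{(I)} = P_n\{\varphi_{\lambda_n}(Z;P)\} + (P_n - P)\{\varphi_{\lambda_n}(Z;\hat P_{n,\lambda_n}) - \varphi_{\lambda_n}(Z;P)\} + R_n.
\]
The middle term, conditional on the auxiliary sample, is a centered i.i.d.\ average with conditional variance $n^{-1}\text{Var}_P\!\left(\varphi_{\lambda_n}(Z;\hat P_{n,\lambda_n}) - \varphi_{\lambda_n}(Z;P)\right)$; by the consistency in Assumption 5 together with strict overlap (Assumption 6) and boundedness (Assumption 7), this variance is $o_P(1)$, so the term is $o_P(n^{-1/2})$. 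For $R_n$, the explicit product form of the AIPW remainder for the ATE evaluated at $\lambda_n$, combined with Cauchy--Schwarz and overlap, gives $R_n = O_P\big(\|\hat e_n(\lambda_n(X)) - e(\lambda_n(X))\|\,\|\hat\mu_{n,a}(\lambda_n(X)) - \mu_a(\lambda_n(X))\|\big) = o_P(n^{-1/2})$ by Assumption 2.

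Assembling the pieces and then using Assumption 4 to pass from $\varphi_{\lambda_n}$ to $\varphi_{\lambda^*}$ at cost $o_P(n^{-1/2})$ yields $\hat\psi - \psi(P) = P_n\{\varphi_{\lambda^*}(Z;P)\} + o_P(n^{-1/2})$. Since $\varphi_{\lambda^*}(\cdot;P)$ has mean zero under $P$ (it is a valid influence function for $\psi_{\lambda^*}$) and finite variance (Assumption 4), multiplying by $\sqrt n$ and invoking the CLT and Slutsky's lemma gives $\sqrt n(\hat\psi - \psi(P)) \to \N(0, \text{Var}(\varphi_{\lambda^*}(Z;P)))$. The ATT version stated in Theorem 1 follows identically, replacing $\varphi_\lambda$ by the ATT influence function, which is also Neyman-orthogonal with a product-form remainder.

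The step I expect to be the main obstacle is handling the randomness of the estimated representation $\lambda_n$ cleanly: the textbook double-machine-learning argument fixes a finite-dimensional nuisance, whereas here the conditioning $\sigma$-field (the auxiliary fold) also determines $\lambda_n$, so every ``conditional on $\hat P$'' bound --- the empirical-process term and the remainder $R_n$ --- must be established conditionally on that fold and only afterwards have its $o_P$ conclusion lifted over the auxiliary randomness. Closely related, Assumption 4 is doing essential work: it is what forces the sequence of influence functions $\varphi_{\lambda_n}$ to possess a single well-defined limit $\varphi_{\lambda^*}$, so that the asymptotic variance is even well-defined; I would want to make explicit that this is an extra requirement not implied by Assumptions 1--2 and 5.
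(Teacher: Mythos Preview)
Your proposal is correct and follows essentially the same route as the paper's proof: the same two-term split into the fixed-$\lambda_n$ AIPW error and the omitted-variable bias, the same von Mises expansion of term (I) into a leading sample average, an empirical-process term handled via cross-fitting and Chebyshev, and a product-form remainder handled via Assumption 2, followed by the same use of Assumption 4 to pass to $\varphi_{\lambda^*}$ before applying the CLT. The only cosmetic slip is a sign in the OVB identity---the paper obtains $\psi_{\lambda_n}(P)-\psi(P) = -\text{Cov}(\cdot)$ rather than $+\text{Cov}(\cdot)$---but since Assumption 1 bounds the covariance itself, this does not affect the argument.
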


\begin{proof}
We want to analyze the following difference:
\begin{equation}
\hat \psi - \psi(P).
\end{equation}
Note that we have the following decomposition:
\begin{equation}
\label{eqn:ovb_decomposition}
\hat \psi - \psi(P) = \underbrace{\hat \psi - \psi_{\lambda_n}(P)}_{A_n} + \underbrace{\psi_{\lambda_n}(P) - \psi(P)}_{B_n}.
\end{equation}
The first difference $A_n$ is the difference between the true and estimated functional \textit{for a fixed representation}. The second difference $B_n$ does not involve estimated functions; it is the difference in the target quantity for the representation and for the full $X$. In other words, $B_n$ is the omitted variable bias. Our proof will proceed by using standard DML proof techniques to show that $\sqrt{n}A_n$ converges in probability to a zero-mean Gaussian distribution and use Assumption 1 to show that $B_n$ goes to 0 at a root-n rate. 

We first analyze $A_n$. Our technique will follow \citep{kennedy2022semiparametric}. We have 
\begin{equation}
\label{eqn:error_expansion}
\hat \psi - \psi_{\lambda_n}(P) = \psi_{\lambda_n}(\hat P_{n,\lambda_n}) -  \psi_{\lambda_n}(P) + P_n\{\varphi_\lambda(Z;\hat P_{n,\lambda_n})\}
\end{equation}
Note that we have the following von Mises expansion for any two distributions $P$ and $\hat P$:
\begin{align*}
\textstyle \psi_{\lambda}(\hat P) - \psi_{\lambda}(P) &= \textstyle \int \varphi_\lambda(z;\hat P)d(\hat P - P)(z) + R_2(\hat P, P) \\
 &=\hat P\{\varphi_\lambda(Z;\hat P)\} - P\{\varphi_\lambda(Z;\hat P)\} + R_2(\hat P, P)\\
 &=-P\{\varphi_\lambda(Z;\hat P)\} + R_2(\hat P, P)
\end{align*}
where
\begin{align*}
R_2(\hat P, P) &= \textstyle \int \left(\frac{1}{\hat e(\lambda(x))} - \frac{1}{e(\lambda(x))}\right)(\mu_1(\lambda(x)) - \hat \mu_1(\lambda(x)))e(\lambda(x))dP(x) \\
& \ \ \ \ + \textstyle \int \left(\frac{1}{1-\hat e(\lambda(x))} - \frac{1}{1-e(\lambda(x))}\right)(\hat\mu_0(\lambda(x)) - \mu_0(\lambda(x)))(1-e(\lambda(x)))dP(x).
\end{align*}
and the last equality follows from
\begin{equation}
\label{eqn:von_mises_zero_mean}
\hat P\{\varphi_\lambda(Z;\hat P)\} = 0.
\end{equation}
(These can be verified with basic calculus.) Then we can write \Cref{eqn:error_expansion} as 
\begin{align*}
\hat \psi - \psi_{\lambda_n}(P) &= (P_n-P)\{\varphi_{\lambda_n}(Z;\hat P_{n, \lambda_n})\} + R_2(\hat P_{n,\lambda_n}, P)\\
&= (P_n-P)\{\varphi_{\lambda_n}(Z;P)\} + (P_n-P)\{ \varphi_{\lambda_n}(Z; \hat P_{n,\lambda_n}) - \varphi_{\lambda_n}(Z;P) \} \\
& \ \ \ \ \ \ \ \ + R_2(\hat P_{n, \lambda_n}, P)\\
& \equiv S^* + T_1 + T_2
\end{align*} 
We will show that $\sqrt{n}S^*$ behaves asymptotically like a Gaussian up to $o(1)$ error, and then show that $T_1$ and $T_2$ are each $o(n^{-1/2})$. 

We first analyze the first term:
\begin{align*}
S^* &= (P_n-P)\{\varphi_{\lambda_n}(Z;P)\} \\
&= P_n\{\varphi_{\lambda_n}(Z;P)\}\\
&= P_n\{\varphi_{\lambda^*}(Z;P)\} + P_n\{\varphi_{\lambda_n}(Z;P) - \varphi_{\lambda^*}(Z;P)\}
\end{align*}
where the second equality comes from the fact the influence function is zero-mean (\Cref{eqn:von_mises_zero_mean}) and the last equality comes from adding and subtracting the same term. By Assumption 4 we have
\begin{equation}
\label{eqn:sample_average_assumption}
P_n\{\varphi_{\lambda_n}(Z;P) - \varphi_{\lambda^*}(Z;P)\} = o_P(n^{-1/2}),
\end{equation}
so $S^* = P_n\{\varphi_{\lambda^*}(Z;P)\} + o_P(n^{-1/2})$. This is a sample average of a zero-mean function plus an $o_P(n^{-1/2})$ term, so by the Central Limit Theorem $\sqrt{n}S^*$ converges in distribution to a zero-mean normal distribution with variance $\text{Var}(\varphi_{\lambda^*}(Z;P))$. Therefore,
\begin{equation}
\hat \psi - \psi(P) = P_n\{\varphi_{\lambda^*}(Z; P)\} + T_1 + T_2 + B_n + o_P(n^{-1/2}).
\end{equation}

We now analyze the second term $T_1$:
\begin{align*}
T_1 &= (P_n-P)\{ \varphi_{\lambda_n}(Z; \hat P_{n,\lambda_n}) - \varphi_{\lambda_n}(Z;P) \}.
\end{align*}
Using the definition of the influence function, we can write out
\begin{align*}
T_1 &= (P_n-P)\{ \varphi_{\lambda_n}(Z; \hat P_{n,\lambda_n}) - \varphi_{\lambda_n}(Z;P)\}\\
&= \textstyle (P_n-P)\bigg\{\left(\frac{A}{\hat e(\lambda_n(X))}-\frac{1-A}{1-\hat e(\lambda_n(X))}\right)(Y-\hat \mu_A(\lambda_n(X))) + \hat \mu_1(\lambda_n(X))-\hat \mu_0(\lambda_n(X)) -\psi_{\lambda_n}(\hat P_{n,\lambda_n})\\
& \ \ \ \ \ \ \ \ \  \ \ \ \ \ \ \ \ \ \ \ \  \ \ \ \ \ \ \textstyle \left(\frac{A}{e(\lambda_n(X))}-\frac{1-A}{1-e(\lambda_n(X)))}\right)(Y-\mu_A(\lambda_n(X))) + \mu_1(\lambda_n(X)) - \mu_0(\lambda_n(X))-\psi_{\lambda_n}(P) \bigg\}\\
&= \textstyle (P_n-P)\bigg\{\left(\frac{A}{\hat e(\lambda_n(X))}-\frac{1-A}{1-\hat e(\lambda_n(X))}\right)(Y-\hat \mu_A(\lambda_n(X))) + \hat \mu_1(\lambda_n(X))-\hat \mu_0(\lambda_n(X))\\
& \ \ \ \ \ \ \ \ \  \ \ \ \ \ \ \ \ \ \ \ \  \ \ \ \ \ \ \textstyle \left(\frac{A}{e(\lambda_n(X))}-\frac{1-A}{1-e(\lambda_n(X)))}\right)(Y-\mu_A(\lambda_n(X))) + \mu_1(\lambda_n(X)) - \mu_0(\lambda_n(X))\bigg\},
\end{align*}
where the last equality follows from the fact that $\psi_{\lambda_n}(\hat P_{n,\lambda_n})$ and $\psi_{\lambda_n}(P)$ are not random so $(P_n-P)\{\psi_{\lambda_n}(\hat P_{n,\lambda_n})\} = 0$ and similarly for $\psi_{\lambda_n}(P)$. 
To simplify notation, call
\begin{align*}
\hat f(Z) &= \textstyle \left(\frac{A}{\hat e(\lambda_n(X))}-\frac{1-A}{1-\hat e(\lambda_n(X))}\right)(Y-\hat \mu_A(\lambda_n(X))) + \hat \mu_1(\lambda_n(X))-\hat \mu_0(\lambda_n(X))\\
f(Z) &=\textstyle \left(\frac{A}{e(\lambda_n(X))}-\frac{1-A}{1-e(\lambda_n(X)))}\right)(Y-\mu_A(\lambda_n(X))) + \mu_1(\lambda_n(X)) - \mu_0(\lambda_n(X)).
\end{align*}
Note that
\begin{align*}
\hat f(Z) - f(Z) &= \textstyle \left(1-\frac{A}{e(\lambda_n(X))}\right)(\hat \mu_{n,1}(\lambda_n(X)) - \mu_1(\lambda_n(X))) \\
& \ \ \ \ \textstyle  + \left(1+\frac{1-A}{1-e(\lambda_n(X))}\right)(\mu_0(\lambda_n(X))-\hat \mu_{n,0}(\lambda_n(X)))\\
& \textstyle \ \ \  \ + \left(\frac{A(Y-\hat \mu_{n,1}(\lambda_n(X)))}{\hat e_n(\lambda_n(X))e(\lambda_n(X))}-\frac{(1-A)(Y-\hat \mu_{n,0}(\lambda_n(X)))}{(1-\hat e_n(\lambda_n(X))(1-e(\lambda_n(X))} \right)(e(\lambda_n(X)) - \hat e_n(\lambda_n(X))).
\end{align*}
By Assumptions 4 and 5, we have $\epsilon \leq e(\lambda_n(X)) \leq 1-\epsilon$ and $\epsilon \leq \hat e_n(\lambda_n(X)) \leq 1-\epsilon$ with $0 < \epsilon < 0.5$ and $|Y-\hat \mu_{n,a}(\lambda_n(X))| \leq C$ for some $C < \infty$ and $a \in \{0, 1\}$. Assumption 3 gives us $\|\hat \mu_{n, a}(\lambda_n(X))-\mu_{n,a}(\lambda_n(X))\| = o_P(1)$ for $a \in \{0, 1\}$ and $\|\hat e_n(\lambda_n(X)) - e(\lambda_n(X))\| = o_P(1)$. It then follows that
\begin{align*}
\|\hat f(Z) - f(Z)\| &\leq \textstyle \left(1 + \frac{1}{\epsilon}\right) \|\hat \mu_{n,1}(\lambda_n(X)) - \mu_{1}(\lambda_n(X))\| \\
& \ \ \ \ \textstyle + \left(1 + \frac{1}{1-\epsilon}\right) \|\hat \mu_{n,0}(\lambda_n(X)) - \mu_{0}(\lambda_n(X))\| \\
& \ \ \ \ \textstyle + \left(\frac{C}{\epsilon^2} + \frac{C}{(1-\epsilon)^2}\right) \|e(\lambda_n(X)) - \hat e_n(\lambda_n(X))\|\\
&= o_P(1)
\end{align*}
Next, we use the fact that since the functions $\hat e_n$, $\hat \mu_{n,a}$, and $\lambda_n$ are estimated out-of-sample (Assumption 3), then 
\begin{equation}
(P_n-P)\{\hat f(Z)-f(Z)\} =\textstyle O_P\left(\frac{\|\hat f(Z)-f(Z)\|}{\sqrt{n}}\right).
\end{equation}
To see this, we follow the general proof from \citep{kennedy2020sharp}. Namely, 
\begin{equation*}
\textstyle \E[P_n(\hat f(Z) - f(Z))] =  \E\left[\frac{1}{n}\sum_i \hat f(Z_i) - f(Z_i)\right] = \E[\hat f(Z) - f(Z)] = P\{\hat f(Z) - f(Z)\},
\end{equation*}
so $(P_n-P)\{\hat f(Z)-f(Z)\}$ has zero mean. We also have
\begin{align*}
\text{Var}[(P_n-P)\{\hat f(Z)-f(Z)\}] &= \text{Var}[P_n\{\hat f(Z)-f(Z)\}] \\
&= \textstyle \text{Var}\left[\frac{1}{n} \sum_i (\hat f(Z_i) - f(Z_i))\right]\\
&= \textstyle  \frac{1}{n}\text{Var}\left[\hat f(Z) - f(Z)\right] \\
&\leq \textstyle \|\hat f(Z) - f(Z)\|^2/n,
\end{align*} 
where the last inequality follows from the definition of variance. Then using Chebyshev's inequality,
\begin{equation}
P\left\{\frac{|(P_n-P)\{\hat f(Z)-f(Z)\}|}{\|\hat f(Z)-f(Z)\|/\sqrt{n}} \geq t \right\} \leq \frac{1}{t^2}.
\end{equation}
Since we can pick $t=1/\sqrt{\epsilon}$ for any $\epsilon > 0$, we have the result that 
\begin{equation}
(P_n-P)\{\hat f(Z)-f(Z)\} =\textstyle O_P\left(\frac{\|\hat f(Z)-f(Z)\|}{\sqrt{n}}\right) = o_P(n^{-1/2}),
\end{equation}
using the fact that $\|\hat f(Z) - f(Z)\| = o_P(1)$. We therefore have 
\begin{equation}
\hat \psi - \psi(P) = P_n\{\varphi_{\lambda^*}(Z;P)\} + T_2 + B_n + o_P(n^{-1/2}).
\end{equation}
We now analyze $T_2$. By definition we have
\begin{align*}
T_2 &=R_2(\hat P_{n,\lambda_n}, P) \\
&=  \int \left(\frac{1}{\hat e_n(\lambda_n(x))} - \frac{1}{e(\lambda_n(x))}\right)(\mu_1(\lambda_n(x)) - \hat \mu_{n,1}(\lambda_n(x)))e(\lambda_n(x))dP(x) \\
& \ \ \ \ + \int \left(\frac{1}{1-\hat e_n(\lambda_n(x))} - \frac{1}{1-e(\lambda_n(x))}\right)(\hat\mu_{n,0}(\lambda_n(x)) - \mu_0(\lambda_n(x)))(1-e(\lambda_n(x)))dP(x).
\end{align*}
Using the fact that $\epsilon \leq \hat e(\lambda_n(x)) \leq 1-\epsilon$ with probability one, we have
\begin{align*}
R_2(\hat P_{n,\lambda_n}, P) &\leq \left(\frac{1}{\epsilon}\right) \int |e(\lambda_n(x))-\hat e_n(\lambda_n(x))| * |\mu_1(\lambda_n(x))-\hat \mu_{n,1}(\lambda_n(x))| dP(x)\\
& \ \ \ \  \ \ \ + \left(\frac{1}{\epsilon}\right) \int |e(\lambda_n(x))-\hat e_n(\lambda_n(x))| * |\mu_0(\lambda_n(x))-\hat \mu_{n,0}(\lambda_n(x))| dP(x)\\
&\leq \left(\frac{1}{\epsilon}\right)\|\hat e_n(\lambda_n(X))-e(\lambda_n(X))\|\|\hat \mu_{n,0}(\lambda_n(X))-\mu_0(\lambda_n(X))\|\\
& \ \ \ \ + \left(\frac{1}{\epsilon}\right)\|\hat e_n(\lambda_n(X))-e(\lambda_n(X))\|\|\hat \mu_{n,1}(\lambda_n(X))-\mu_1(\lambda_n(X))\|.
\end{align*}
By Assumption 2 (\Cref{assumption:combined_root_n_error}), $\|\hat e_n(\lambda_n(X)) - e(\lambda_n(X))\|\|\hat \mu_{n,a}(\lambda_n(X))-\mu_{n,a}(\lambda_n(X))\| = o_P(n^{-1/2})$ for $a \in \{0, 1\}$. Therefore, $R_2(\hat P_{n,\lambda_n}, P) = o_P(n^{-1/2})$ and
\begin{equation}
\hat \psi - \psi(P) = P_n\{\varphi_{\lambda^*}(Z;P)\} + B_n + o_P(n^{-1/2}).
\end{equation}

We conclude by analyzing $B_n$. We show that $B_n = o_P(n^{-1/2})$ by the OVB assumption (\Cref{assumption:ovb}). Our proof technique follows \citep{chernozhukov2022long}. Recall:
\begin{equation}
B_n = \psi_{\lambda_n}(P) - \psi(P).
\end{equation}
Note that we can write 
\begin{align*}
\psi(P) &= \E_P[\mu_A(X) * \alpha_A(X)]\\
\psi_{\lambda_n}(P) &= \E_P[\mu_A(\lambda_n(X)) * \alpha_A(\lambda_n(X))],
\end{align*}
where recall 
\begin{equation}
    \alpha_a(x) = \textstyle \left(\frac{a}{e(x)}-\frac{1-a}{1-e(x)}\right), \qquad \alpha_a(\lambda(x)) = \textstyle \left(\frac{a}{e(\lambda(x))}-\frac{1-a}{1-e(\lambda(x))}\right).
\end{equation}
Also note that
\begin{align}
\E_P[\mu_{A}(\lambda_n(X)) * (\alpha_A(X) - \alpha_A(\lambda_n(X)))] &= 0 \label{eqn:reisz_orthogonal_1} \\
\E_P[\alpha_{A}(\lambda_n(X)) * (\mu_A(X) - \mu_A(\lambda_n(X)))] &= 0 \label{eqn:reisz_orthogonal_2}.
\end{align}
Then
\begin{align*}
B_n &= \psi_{\lambda_n}(P) - \psi(P)\\
&= \E_P\{\mu_A(\lambda_n(X))\alpha_A(\lambda_n(X)) - \mu_A(X)\alpha_A(X)\} \\
&= \E_P\{\mu_A(\lambda_n(X))\alpha_A(\lambda_n(X))\}  \\
& \ \ \ \ \ \ \ \ \ \ - \E_P\left\{[\mu_A(\lambda_n(X)) + \mu_A(X) - \mu_A(\lambda_n(X))] * [\alpha_A(\lambda_n(X))+\alpha_A(X) - \alpha_A(\lambda_n(X))]  \right\}\\
&= -\E_P\{\mu_A(\lambda_n(X))[\alpha_A(X)-\alpha_A(\lambda_n(X))]\} - \E_P\{\alpha_A(\lambda_n(X))[\mu_A(X) - \mu_A(\lambda_n(X))]\} \\
& \ \ \ \ \ \ \ \ \ \ - \E_P\{[\mu_A(X) - \mu_A(\lambda_n(X))][\alpha_A(X)-\alpha_A(\lambda_n(X))]  \}\\
&= - \E_P\{[\mu_A(X) - \mu_A(\lambda_n(X))][\alpha_A(X)-\alpha_A(\lambda_n(X))]  \}, \\
&= -\text{Cov}\left[\mu_A(X) - \mu_A(\lambda_n(X)), \alpha_A(X) - \alpha_A(\lambda_n(X))\right],
\end{align*}
where the second-to-last equality follows from \Cref{eqn:reisz_orthogonal_1,eqn:reisz_orthogonal_2} and the last equality is due to each term being zero-mean. By Assumption 1, it then follows that $B_n=o_P(n^{-1/2})$. 

Therefore, we've established
\begin{equation}
\hat \psi - \psi(P) = P_n\left\{\varphi_{\lambda^*}(Z;P)\right\} + o_P(n^{-1/2}).
\end{equation}
By the central limit theorem, it follows that
\begin{equation}
\sqrt{n}(\hat \psi - \psi(P)) \to \N\left(0, \text{Var}(\varphi_{\lambda^*}(Z;P)\right),
\end{equation}
completing the proof. 
\end{proof}
The proof for the ATT functional in Theorem 1 follows the same argument, except the influence function is given by
\begin{equation*}
    \label{eqn:att_influence_fn}
    \psi_\lambda(Z;P) = \textstyle \frac{1}{P(A=1)}\left(\left(A-\frac{(1-A)e(\lambda_n(X))}{(1-e(\lambda_n(X))}\right)(Y-\mu_0(X)) - A\int (\mu_1(X) - \mu_0(X)) dP(X|A=1)\right)
\end{equation*}
and the OVB term $B_n$ is given by 
\begin{equation}
    \label{eqn:ovb_att}
    B_n = -\text{Cov}[\mu_A(X) - \mu_A(\lambda_n(X)), \alpha_A(X) - \alpha_A(\lambda_n(X))],
\end{equation}
where
\begin{equation}
    \alpha_A(\lambda_n(X)) = \frac{1}{P(A=1)}\left(A - (1-A)\frac{e(\lambda_n(X))}{1-e(\lambda_n(X))}\right).
\end{equation}
The results from \citep{chernozhukov2018double} can be used to show that $\sqrt{n}A_n$ converges in probability to a zero-mean Gaussian random variable with variance $\text{Var}(\varphi_{\lambda^*}(Z;P))$ plus an $o_P(1)$. Meanwhile it follows from \Cref{eqn:ovb} and Assumption 1 that $\sqrt{n}B_n$ is $o_P(1)$, confirming the result. 
 
\section{Data Construction}
\label{app:data_preprocessing}

We estimate wage gaps using the Panel Study of Income Dynamics \citep{psid}, or PSID. PSID is a longitudinal survey that follows a cohort of American families frequently used to estimate gender wage gaps \citep{blau_gender_2017}. We follow \citep{blau_gender_2017} in preparing this data. While \citep{blau_gender_2017} release data for specific PSID years, our study requires a larger sample (e.g. to track workers over time). So we clean and preprocess the data, following the steps described in \citep{blau_gender_2017}. 

Specifically, we pair individuals across surveys using `intnum68' and `pernum68' as a unique label. Because different surveys use different occupation codes, we convert all occupation codes to one of the 330 `occ1990dd' codes using the crosswalk provided in \citep{david2013growth}. 
We add seven categories for when an individual's occupation is unavailable but their employment status is available. These categories are: employed, temporarily laid off, unemployed, disabled, retired, homemaker, and student. In all of our analyses, these categories are treated as special kinds of occupations. 
We also recode industry, education levels, and demographic characteristics to ensure comparability across years. 

Interviews were conducted annually from 1968 until 1997, and they have been conducted biannually since. Since the survey has been conducted biannually since 1997, we follow \citep{blau_gender_2017} and impute experience information for the skipped years using retroactive questions. Specifically, we employ a two-step procedure: we first determine if an individual worked at all, then estimate if they worked full-time. These imputations use logistic regression models with demographic and historical employment variables as predictors, estimated separately for men and women to account for gender-specific patterns in labor force participation. 

We adjust nominal wages for inflation using the Consumer Price Index, with 2015 as the base year. Ewe calculate hourly wages and censor values below \$2 per hour (in 2015 dollars) to address implausibly low wages that likely represent measurement error. We exclude self-employed individuals from wage calculations due to difficulties in accurately measuring their labor income. We define full-time work as 1500 or more annual hours. 

We note that our reported results for the raw wage gaps or regression-adjusted wage gaps are not identical to those reported in \citep{blau_gender_2017}. This is for a few reasons: first, we consider a cohort of workers between 1990-2019, while \citep{blau_gender_2017} mainly focus on four individual-year cohorts. Second, our analysis in the main sample considers a trimmed population in order to encourage overlap \citep{crump2006moving}. Additionally, we use cross-fitting for all our analyses to prevent overfitting \citep{chernozhukov2018double}. Moreover, \citep{blau_gender_2017} adjust for how urban an individual's area is, which we do not because the relevant variable requires access to the PSID restricted sample. However, we note that \citep{blau_gender_2017} do not find that location variables explain much of the wage gap, e.g. explaining 0.0\% of the gap in 1980 and 0.3\% in 2010.

\begin{table}[p]
  \footnotesize
  \centering
  \caption{The PSID sample used for wage gap analysis differs slightly from the one used in \citep{blau_gender_2017}.
  }
  \begin{tabular}{ l  c c c c}
   & \multicolumn{2}{c}{\textbf{Our sample}} & \multicolumn{2}{c}{\textbf{Blau and Kahn (2017) sample}} \\
  \toprule
       & Unadjusted ratio & Adjusted ratio & Unadjusted ratio & Adjusted ratio \\
       \midrule
  1989 & 0.740 & 0.904 & 0.740 & 0.924 \\
  1998 & 0.768 & 0.915 & 0.772 & 0.914 \\
  2010 & 0.797 & 0.921 & 0.793 & 0.916 \\
  \bottomrule
 \end{tabular}
  \label{tab:app:blau_kahn_ratio_comparison}
\end{table}

Additionally, while we attempt to follow \citep{blau_gender_2017} in constructing our sample, we ended up with a slightly different sample due to ambiguities in variable definitions and how to impute missing experience. \Cref{tab:app:blau_kahn_ratio_comparison} shows a comparison of wage gaps imputed from our sample and from the sample in \citep{blau_gender_2017}. To make results comparable, we do not perform trimming or use cross-fitting for either. The adjusted wage ratio is computed using the linear regression described in \Cref{sec:predictivePSID} with coarse-grained occupational encodings. We adjust for the full set of variables available in each sample, which are identical except we include the MSA indicator in \citep{blau_gender_2017} for denoting how rural a population is. The unadjusted ratios are all very similar, differing by $0.004$ at most. There is more variation in the adjusted ratios, which differ by as much as $0.020$ in 1989 but by only $0.001$ and $0.005$ in 1998 and 2010, respectively.

\section{Model and Training Details}
\label{sec:app:model_and_training_details}
Here, we describe the fine-tuning approaches in more detail. We use the CAREER model as an initial representation \citep{vafa2023career}. We use $D=64$ dimensions for the representation, 4 encoder layers, 4 attention heads, and 256 hidden units for the feedforward neural networks. 
For our final model, we pretrain 16 models and perform fine-tuning over all pretrained seeds to form a final ensembled model \citep{dietterich2000ensemble,lakshminarayanan2017simple}. 
This is considerably smaller than the transformers used to train large language models, which have billions of parameters  \citep{vaswani2017attention,Devlin2019Bert,ouyang2022training}. We find that smaller transformer models are effective at modeling sequence of occupations, which are generally shorter and less complex than the sequences of words large language models are applied to. 

We first fit the model to a dataset of 24 million resumes (``pretraining''). Our pretraining objective follows \citep{vafa2023career}: predict the next job each individual will have. 

We consider four different fine-tuning approaches to adjust the representation on survey data used for wage gap estimation. When we fine-tune, we model wage as a function of the representation $\lambda$ and covariates $Z \in \R^P$:
\begin{equation*}
    \hat \mu_A(\lambda(X), Z) = \rho_G(\lambda(X)) + \beta_G^\top Z,
\end{equation*}
where $\rho_G$ is a two-layer feedforward neural network with 64 hidden units and $\beta_G \in \R^P$ is a vector of regression coefficients. 

In supervised fine-tuning, our objective is to minimize the predictive error of wage: 
\begin{equation}
   \label{app:eqn:supervised_fine_tuning}
    \frac{1}{N}\sum_{i=1}^N (Y_i - \hat \mu_{A_i}(\lambda(X_i), Z_i))^2.
\end{equation}
We optimize this objective with respect to the representation $\lambda$, feedforward neural network paramaters comprising $\rho$, and regression coefficients $\beta$. We optimize using Adam \citep{kingma2014adam} and use early stopping based on a validation set. 

In multitask fine-tuning \citep{shi2019adapting,chernozhukov2022riesznet}, we introduce a propensity model as a function of the representation and covariates:
\begin{equation*}
    \label{app:eqn:propensity_model}
    \hat e(\lambda(X), Z) = \sigma(\gamma^\top \lambda(X))
\end{equation*}
where $\sigma(\cdot)$ is the inverse-logit function and $\gamma \in \R^P$ are regression coefficients. 
Our objective is to minimize the combined predictive error of wage and group label:
\begin{equation*}
    \frac{1}{N}\sum_{i=1}^N (Y_i - \hat \mu_{A_i}(\lambda(X_i), Z_i))^2 - \eta * [A_i \log \hat e(\lambda(X_i), Z_i) + (1-A_i) \log(1- \hat e(\lambda(X_i), Z_i))],
\end{equation*}
where $\eta \in \R^+$ is a hyperparameter that controls the tradeoff between wage and propensity losses. Following \citep{shi2019adapting}, we set $\eta$ to 1. 
We optimize this objective with respect to the representation $\lambda$, the parameters comprising $\rho$, and the regression coefficients $\gamma$.  We optimize using Adam \citep{kingma2014adam} and use early stopping based on a validation set.

In projection fine-tuning, we alternate between minimizing \Cref{app:eqn:supervised_fine_tuning} until the loss converges on a held-out set and minimizing the propensity loss until its loss converges on a held-out set:
\begin{equation}
    \label{app:eqn:propensity_loss}
    -\frac{1}{N}\sum_{i=1}^N [A_i \log \hat e(\lambda(X_i), Z_i) + (1-A_i) \log(1- \hat e(\lambda(X_i), Z_i))].
\end{equation}
We optimize each objective with respect to all trainable parameters using Adam and perform early stopping with a validation set. 
In contrast to multi-task fine-tuning, this procedure doesn't require setting a parameter to dictate the tradeoff between propensity and wage losses. In practice, we find that the loss worsens after the second round of wage updates, so we end the procedure there.

For difference-based fine-tuning, we first fit a conditional wage function $\hat m(\lambda_m(x), z)$ to model the conditional wage averaged over the two groups $\E[Y|X=x, Z=z]$. We use
\begin{equation*}
    \hat m(\lambda_m(X), Z) = \rho(\lambda_m(X)) + \beta^\top Z,
\end{equation*}
where $\rho$ is a 2-layer neural network and $\beta \in \R^P$ a vector of regression coefficients. We then fit $\hat m$ and $\lambda_m(X)$ by minimizing the following objective
\begin{equation*}
    \frac{1}{N}\sum_{i=1}^N (\hat m(\lambda_m(X_i), Z_i) - Y_i)^2.
\end{equation*}
We then fit a propensity model $\hat e(\lambda_e(X), Z)$ by minimizing \Cref{app:eqn:propensity_loss}. 
We then introduce a new model $\hat \tau$ to capture the difference in group means:
\begin{equation*}
    \hat \tau(\lambda(X), Z) = \rho(\lambda(X)) + \beta^\top Z,
\end{equation*}
where $\rho$ is a two-layer feedforward neural network and $\beta \in \R^P$ is a vector of regression coefficients. We then minimize the following objective:
\begin{equation*}
    \frac{1}{N}\sum_{i=1}^N [(Y_i - \hat m(\lambda_m(X_i), Z_i)) - (A_i - \hat e(\lambda_e(X_i), Z_i))\tau(\lambda(X_i), Z_i)]^2,
\end{equation*}
with respect to the parameters in $\tau(\lambda(X), Z)$, \textbf{keeping fixed} the parameters underlying $\hat m(\lambda_m(X), Z)$ and $\hat e(\lambda_e(X), Z)$. We again optimize this objective using Adam and use a held-out validation set to monitor for early stopping. 

\Cref{tab:predictive_accuracy_full} shows predictive accuracy metrics. All numbers are with 5-fold cross-fitting.  Because gender is encoded as a binary variable, we use pseudo $R^2$ for gender, $1-\frac{NLL_{\text{null}}-NLL_m}{NLL_{\text{null}}}$, where $NLL_{\text{null}}$ is the negative log-likelihood of a null model that assigns 0.5 probability to each individual, and $NLL_m$ is the negative log-likelihood of the given model (all on held-out data with 5-fold cross splitting). 

\Cref{tab:wage_gaps} shows each model's estimate for the unexplained wage ratio, given by exponentiating each model's estimate of the unexplained log-wage gap (following \citep{blau_gender_2017}). Test-set bootstrapping is done on the final metric; for example. we bootstrap a dataset, compute the wage ratio for that dataset, and include that in our standard deviation. The R-Learner metric is each model's performance on the R-learner metric given in \Cref{eqn:r_learner_eval_metric}, all evaluated on held-out data. The last column is the difference between each method's R-learner metric and the R-learner metric from difference-based fine-tuning.

\section{Semi-Synthetic Details}
\label{app:semi_synthetic_details}
We first obtain a representation $\lambda^*: \mathcal X \to \R^m$ by overfitting a transformer 10x the size of CAREER using supervised fine-tuning (\Cref{app:eqn:supervised_fine_tuning}). 
For each experiment, we're given a true gap $\tau \in \R$, a noise scale $\sigma^2$, and a shared proportion $0 \leq \pi \leq 1$. 
We first randomly select an index set $\mathcal I_A$ of $m/2$ indices between 1 and $m$ uniformly at random to use for the propensity model. 
We then select an index set $\mathcal I_Y$ for the wage model: $\pi * m/2$ are sampled uniformly at random from $\mathcal I_A$, and then $(1-\pi)*m/2$ are sampled uniformly at random from the indices not in $\mathcal I_A$. Denote by $\mathbf{u} \in \{0, 1\}^m$ the vector such that $u_i=1$ if and only if $i \in \mathcal I_A$. Similarly, denote by $\mathbf{v} \in \{0, 1\}^m$ the vector such that $v_i = 1$ if and only if $i \in \mathcal I_Y$. We then simulate a vector of regression coefficients $\beta$ such that $\beta_i \sim \N(0, (1/5)^2)$. We then sample group membership labels via
\begin{equation*}
    A_i \sim \text{Bern}\left(\sigma\left(\textstyle \sum_j u_j\beta_j* \lambda(X_i)_j\right)\right).
\end{equation*}
We then sample noise terms $\epsilon_i \sim \N(0, \sigma^2)$. Finally, to get wages, we set
\begin{equation*}
     Y_i = \textstyle \tau A_i + \sum_j v_j \beta_j  * \lambda(X_i)_j + \epsilon_i.
\end{equation*}

\begin{figure}[p]
\centering
\includegraphics[width=0.9\linewidth]{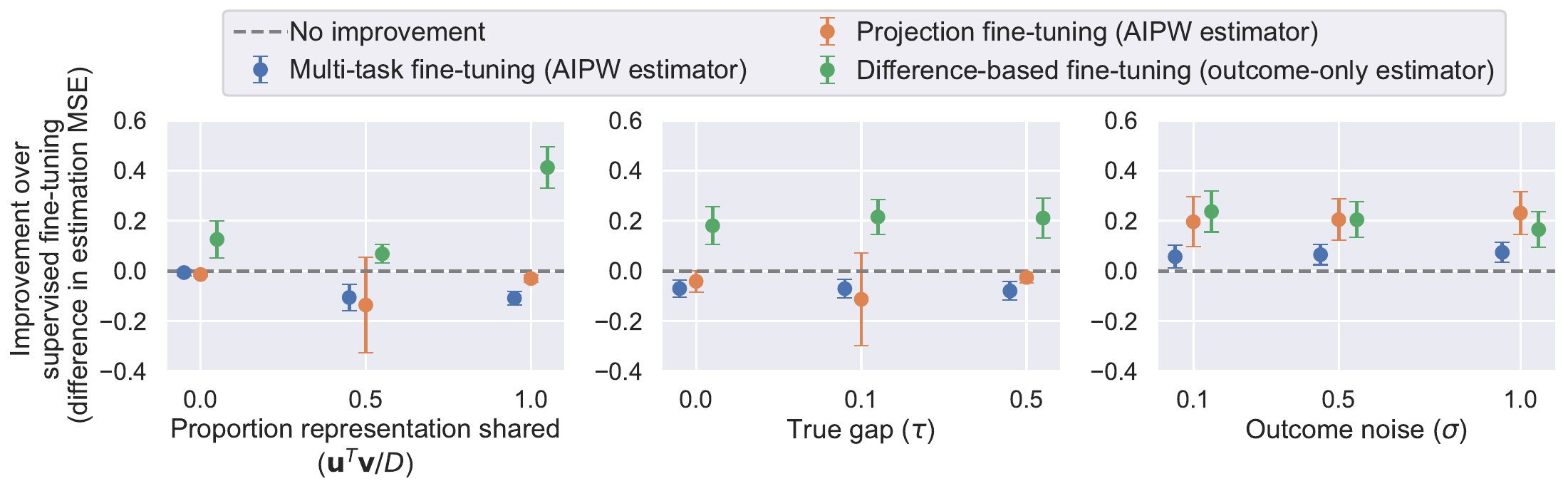}
\caption{The AIPW estimator on semi-synthetic data performs worse than the outcome-only estimator. For each semi-synthetic experiment, the true unexplained wage gap is known, and each method provides a different estimate of this gap. This figure compares each method's average error for estimating this gap, evaluated via MSE between the true and estimated unexplained gap. Specifically, the Y-axis compares each method's estimation error to the error from estimates derived from a model using the outcome-only model and standard supervised fine-tuning (larger values on the Y-axis correspond to larger improvements). Bars represent 95\% confidence intervals.
}
\label{app:fig:semi_synthetic_experiments_aipw}
\end{figure}

\begin{figure}[p]
\centering
\includegraphics[width=0.9\linewidth]{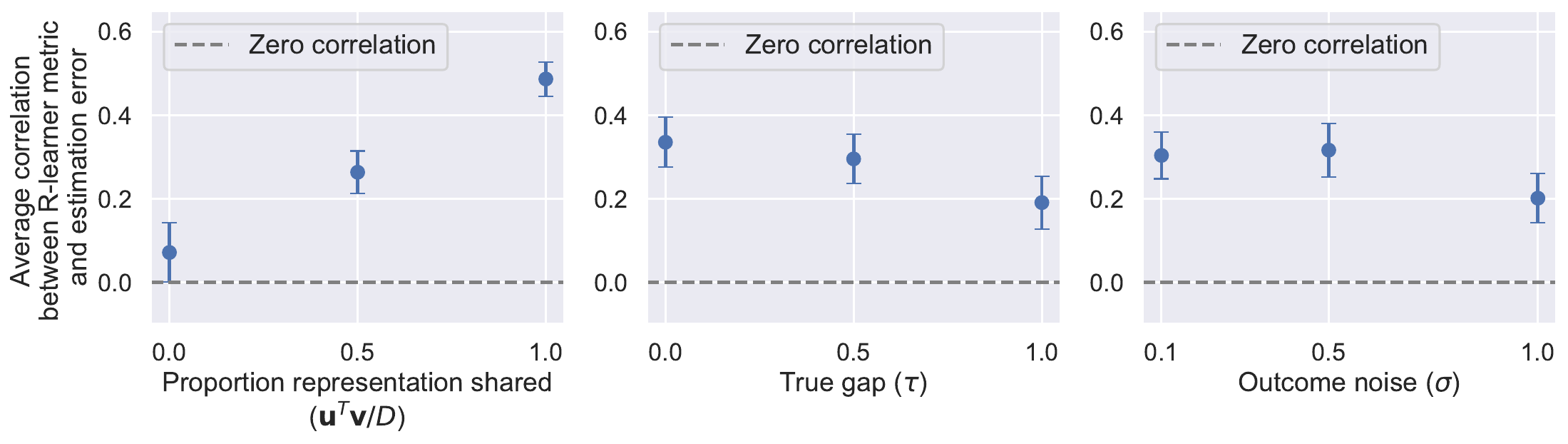}
\caption{
The R-learner evaluation metric is correlated with model estimation error across semi-synthetic experiments. The highest correlation occurs when more of the representation is shared in the underlying data. The models used to calculate this correlation are: linear regression, supervised fine-tuning, and the three debiased fine-tuning approaches described in \Cref{sec:framework}. Test-set bootstrapped standard errors are in parentheses. The full results for each of the 27 settings is in \Cref{tab:app:all_semi_synthetic_results}.
}
\label{app:fig:semi_synthetic_evaluation_metrics}
\end{figure}

\begin{figure}[p]
\centering
\includegraphics[width=0.9\linewidth]{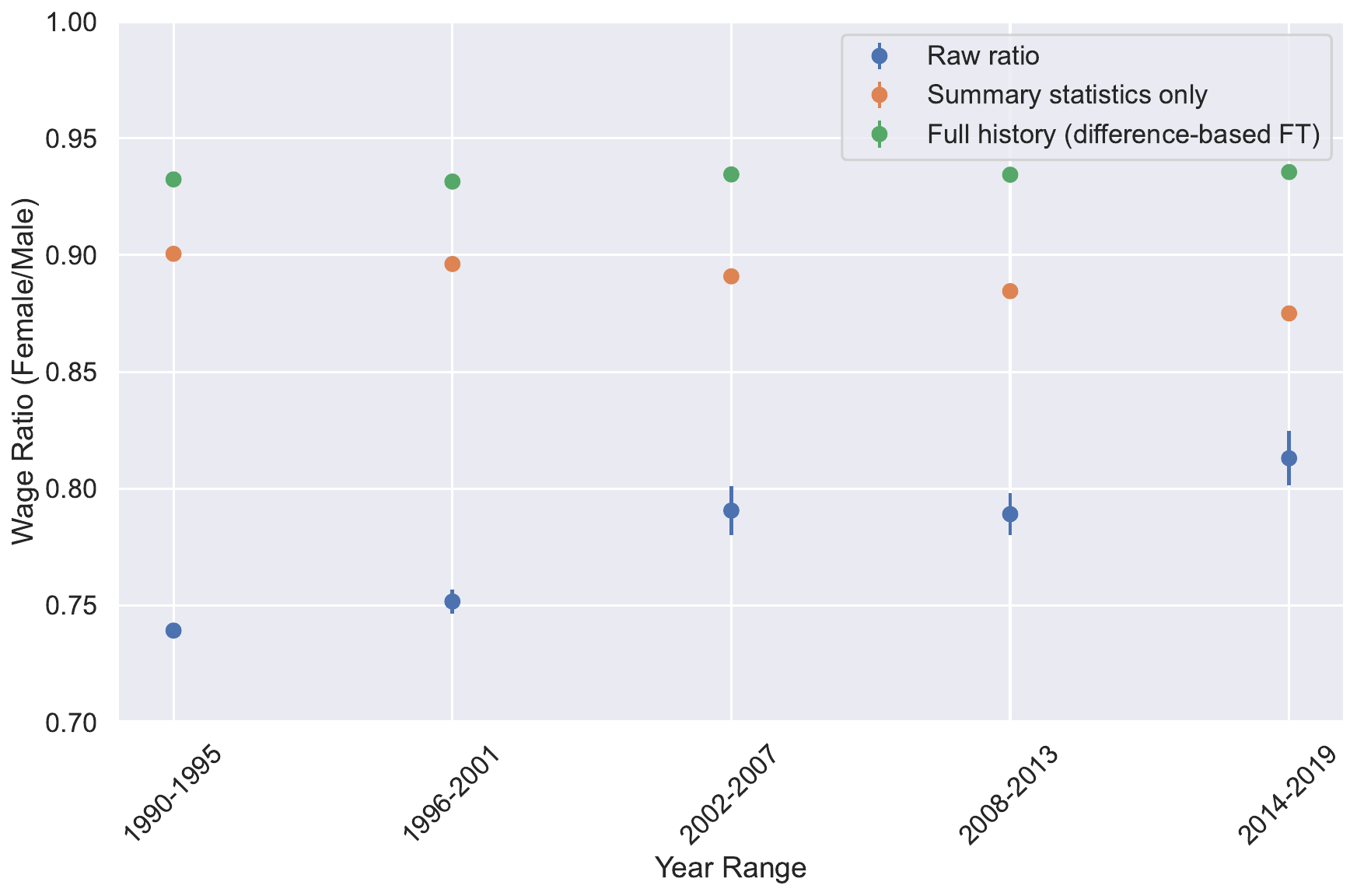}
\caption{Compared to methods that adjust the gender wage gap for summary statistics of history, the learned representations of history explain more of the gap for later years in the PSID survey. Test-set bootstrapped standard errors are in parentheses.
}
\label{app:fig:wage_ratios_over_time}
\end{figure}

\begin{figure}[p]
\centering
\includegraphics[width=0.9\linewidth]{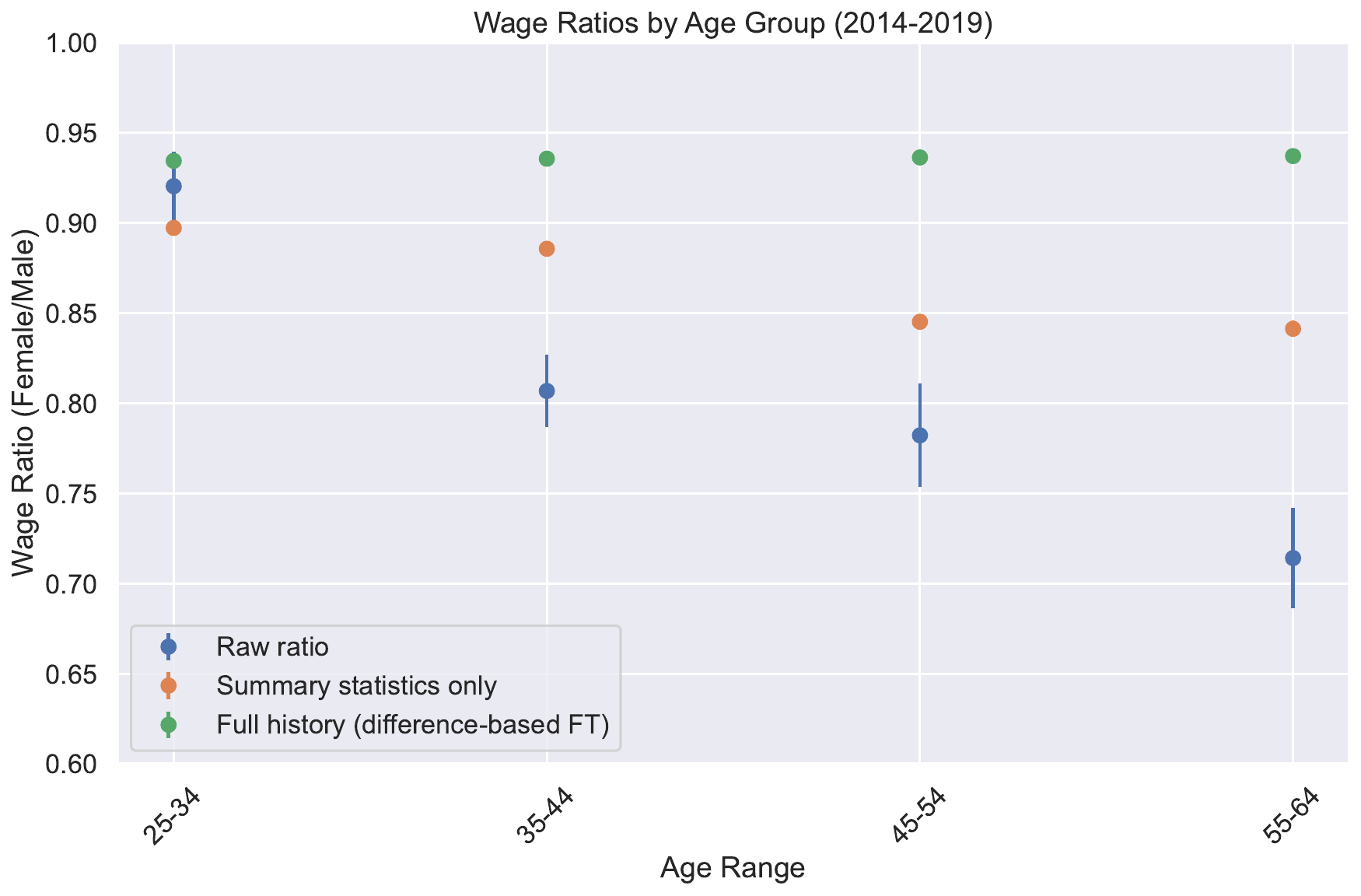}
\caption{For a fixed time period (2014-2019), history explains more of the gender wage gap for older workers than for younger workers. Test-set bootstrapped standard errors are in parentheses.
}
\label{app:fig:wage_ratios_by_age}
\end{figure}

\begin{figure}[p]
\centering
\includegraphics[width=0.9\linewidth]{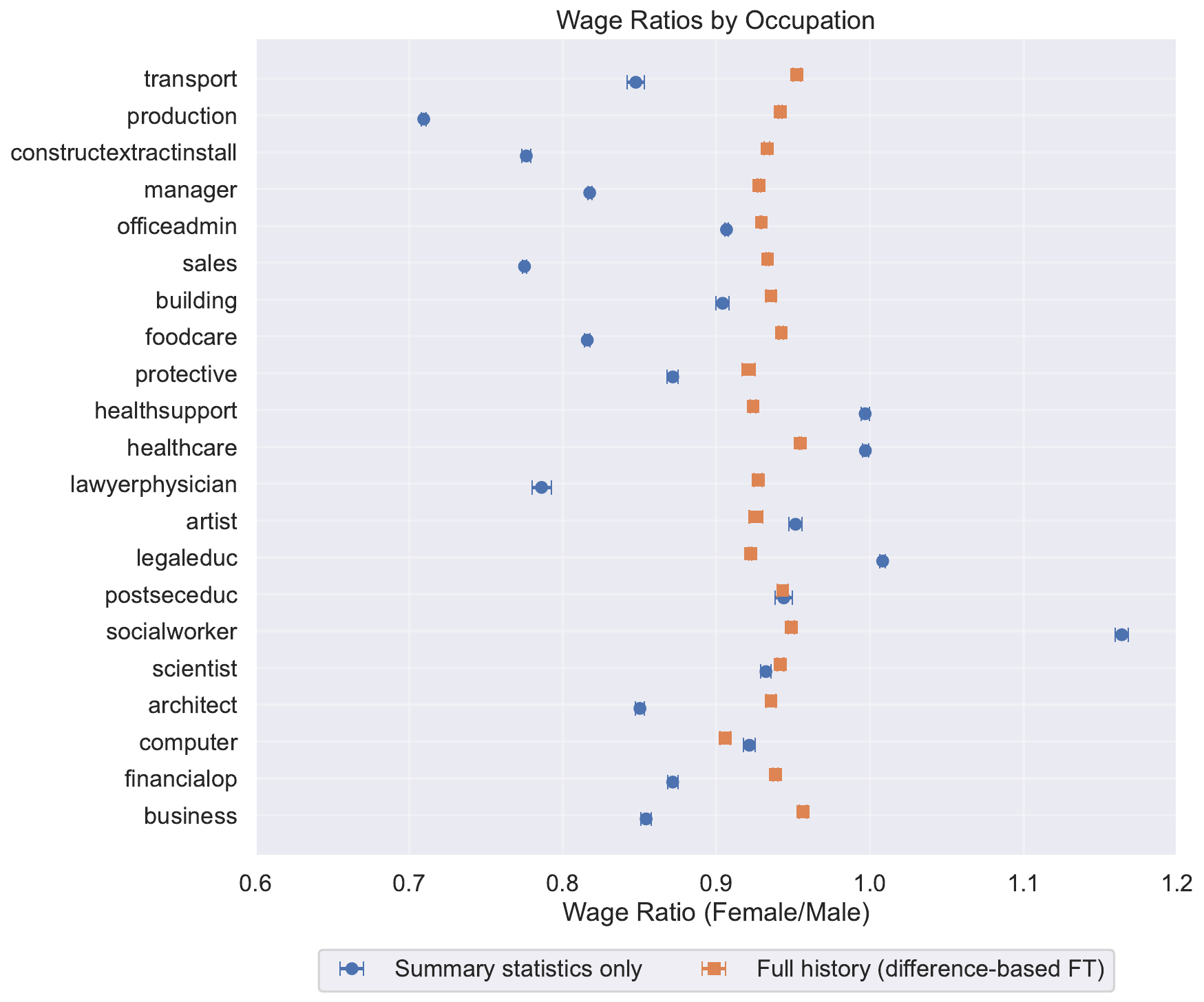}
\caption{Compared to the gender wage gap explained by summary statistics of history, the wage gap explained by representations of full history is smoother across the occupational categories defined by \citep{blau_gender_2017}. Bars represent test-set bootstrapped standard errors.
}
\label{app:fig:wage_ratios_by_occupation}
\end{figure}

We consider three different values for the effect parameter $\tau$ ($0.0$, $0.1$, and $1.0$), three different values for the shared proportion parameter $\pi$ ($0.0$, $0.5$, and $1.0$), and three different values for the noise scale $\sigma^2$ ($0.1$, $0.5$, and $1.0$). For each of the 3x3x3=27 combinations of settings, we simulate 10 datasets, resulting in 270 datasets for the semi-synthetic study. For each of the 270 datasets, we train models using each of the approaches described in the main text. Each point in \Cref{fig:semi_synthetic_experiments} shows the error using the outcome-only estimator averaged over datasets with each setting; for example, there are 90 datasets where the proportion of shared representations is 1.0, and the dashed line represents the averaged supervised fine-tuning estimation error for these datasets while the bars represent 95\% confidence intervals for these datasets. Full experimental results for all 27 settings are in \Cref{tab:app:all_semi_synthetic_results}.

The semi-synthetic results in \Cref{sec:empirical_validation} use the outcome-only estimator. \Cref{app:fig:semi_synthetic_experiments_aipw} compares performance across settings using the AIPW estimator. We do not compute AIPW estimates for representations from the difference-based fine-tuning approach because they're optimized for outcome-only estimators (\Cref{eqn:r_learner_outcome_model}). All estimators perform worse than the outcome-only analogues. 

To validate the R-Learner metric, we compute its correlation with estimation error for the semi-synthetic data. For each of the 270 datasets in our semi-synthetic study, we compute the correlation between the R-learner metric for the five estimation methods (coarse-grained regression, supervised fine-tuning, multitask fine-tuning, projection fine-tuning, and difference-based fine-tuning) and the estimation MSE for each method (i.e. a correlation between two length-5 vectors). This gives us 270 different correlations. \Cref{app:fig:semi_synthetic_evaluation_metrics} plots the correlation averaged datasets; for example, there are 90 datasets where the proportion of shared representations is 1.0, and so we plot the correlation averaged over those 90 datasets. The full results for each of the 27 settings is in \Cref{tab:app:all_semi_synthetic_results}. These results show that performance on the R-Learner metric is correlated with estimation error. 

\section{Details for Qualitative Examples}
\label{app:sec:qualitative_details}
The qualitative exercise in \Cref{sec:gapanalysis} used a clustering method to identify clusters that were important for the wage gap. Here we provide more details. 

We first form clusters of histories by dividing an individual's current occupation into partitions based on the representations of history. For this exercise, we use the representations from the difference-based fine-tuning model. Each occupation belongs to one of 21 coarse-grained categories. We aggregate all observations for each occupational category, and then partition the histories for these observations into 15 clusters, first using t-SNE \citep{van2008visualizing} to project each representation into 2-dimensional space, and then using K-Means to form 15 clusters for each coarse-grained occupation.

Our goal is to find the clusters of history that best enhance the predictive power of the a LASSO model. Instead of refitting the LASSO model, we aim to predict the difference between the original model's wage prediction and the actual wage, using these clusters. If a cluster can accurately predict this difference on new, unseen data, it means that it would improve the original model's predictions.

We use a regression tree to add history clusters to the predictions of the LASSO model. A regression tree incorporates one cluster at a time into its prediction of wages. At each step, it selects the cluster that would most decrease the training MSE if added.  As a result, the sequence in which the regression tree selects clusters provides us with an understanding of their relative importance.

\begin{table}[p]
  \footnotesize
  \centering
  \caption{The top 15 omitted variables as identified by a regression tree as being most predictive of wage.  These variables are correlated with both wage and gender, meaning they result in omitted variable bias when excluded from traditional approaches. The $R^2$ column reports the held-out wage $R^2$ after each feature has been added (so e.g. the $R^2$ value for the 5th added variable is for a model that includes the first 5 variables). 
  }
  \resizebox{\textwidth}{!}{
  \begin{tabular}{c c l l c c}
    \toprule
  Order added & Coarse-grained occupation & Most prevalent current job & Most prevalent previous jobs & Frac. female & $R^2$ \\
  \midrule
   \textbf{1}& manager & manager/administrator & \makecell[l]{manager/administrator\\ salesperson\\ marketing manager} & 0.32 & 0.4558 \\ \midrule
   \textbf{2}& manager & manager/administrator & \makecell[l]{secretary/stenographer\\ bookkeeper\\ admin. support job} & 0.79 & 0.4563 \\ \midrule
   \textbf{3}& education/legal/library & primary school teacher & \makecell[l]{child care worker\\ primary school worker\\ teacher's aide} & 0.80 & 0.4565 \\ \midrule
   \textbf{4}& sales & retail salesperson & \makecell[l]{retail salesperson\\ truck driver\\ marketing manager} & 0.31 & 0.4565 \\ \midrule
   \textbf{5}& manager & manager/administrator & \makecell[l]{computer scientist\\ engineering technician\\ production supervisor} & 0.18 & 0.4568 \\ \midrule
   \textbf{6}& manager & manager/administrator & \makecell[l]{human resource clerk\\ bookkeeper\\ admin. support job} & 0.59 & 0.4569 \\ \midrule
   \textbf{7}& manager & purchasing manager & \makecell[l]{purchasing manager\\ marketing manager\\ salesperson} & 0.32 & 0.4572 \\ \midrule
   \textbf{8}& office/admin. support & mail carrier & \makecell[l]{mail carrier\\ postal clerk\\ mail and paper handler} & 0.50 & 0.4573 \\ \midrule
   \textbf{9}& nurse/healthcare practitioner & optical good worker & \makecell[l]{clinical lab technician\\ health technician\\ health/nursing aide} & 0.79 & 0.4574 \\ \midrule
   \textbf{10}& production occupations & lathe operative & \makecell[l]{precision/craft worker\\ textile machine operator\\ lathe operative job} & 0.24 & 0.4573 \\ \midrule
   \textbf{11}& transportation/materials moving & truck driver & \makecell[l]{shipping/receiving clerk\\ freight laborer\\ machine operator} & 0.07 & 0.4573 \\ \midrule 
   \textbf{12}& manager & manager/administrator & \makecell[l]{industrial engineer\\ production supervisor\\ engineering technician} & 0.10 & 0.4574 \\ \midrule
   \textbf{13}& manager & financial manager & \makecell[l]{financial manager\\ accountant/auditor\\ bookkeeper} & 0.49 & 0.4573 \\ \midrule
   \textbf{14}& transportation/materials moving & hand packer/packager & \makecell[l]{hand packer/packager\\ garage/service-stated occupation\\ packer/filler} & 0.51 & 0.4573 \\ \midrule
   \textbf{15}& construction/extraction/installation & installation worker & \makecell[l]{mason/tiler\\ roofer/slater\\ operating engineer} & 0.10 & 0.4573 \\ 
   \bottomrule
   \end{tabular}
   }
   \label{tab:app:all_15_top_clusters}
\end{table}

\begin{table}[p]
  \footnotesize
  \centering
  \caption{The three most predictive clusters of wage in each of the three most common fine-grained occupations provided by the occ1990dd taxonomy \citep{david2013growth}, considering only those with at least an 80/20 gender balance. These variables are correlated with both wage and gender, meaning they result in omitted variable bias when excluded from traditional approaches.
  }
  \begin{tabular}{c l c}
    \toprule
  Fine-grained occupation & Most prevalent previous jobs & Frac. female \\
  \midrule
    &  \makecell[l]{economist\\ marketing manager\\ salesperson} & 0.21 \\ \cmidrule(l){2-3}
   \textbf{manager/administrator} & \makecell[l]{mechanical engineer\\ engineer\\ electrical engineer} & 0.13 \\ \cmidrule(l){2-3}
   & \makecell[l]{secretary\\ bookkeeper\\ admin. support job} & 0.91   \\ 
   \midrule
  &  \makecell[l]{freight laborer\\ cashier\\ retail salesperson} & 0.59 \\ \cmidrule(l){2-3}
   \textbf{retail salesperson} & \makecell[l]{child care worker\\ secretary\\ customer service rep} & 0.70 \\ \cmidrule(l){2-3}
   & \makecell[l]{machine operator\\ truck driver\\ freight laborer} & 0.29 \\ 
   \midrule
           &  \makecell[l]{child care worker\\ health and nursing aide\\ cashier} & 0.80 \\ \cmidrule(l){2-3}
   \textbf{machine operator} & \makecell[l]{textile machine operator\\ material recording clerk\\ punching/stamping press operator} & 0.22 \\ \cmidrule(l){2-3}
   & \makecell[l]{lathe operator\\ slicing/cutting/crushing machine operator\\ pucnhing/stamping press operator} & 0.39  \\ 
   \bottomrule
   \end{tabular}
   \label{tab:app:fine_grained_omitted_variables}
\end{table}

What are the characteristics of the most important clusters? Rather than exhaustively enumerate each occupation in a cluster, we use a heuristic to identify the most prevalent current jobs and historical jobs for each cluster. Specifically, for each cluster and job, we compute the proportion of histories in the cluster that contain the job. We also compute the proportion of histories in the broader occupational category that contain the job. We take the jobs that are most prevalent in the cluster relevant to the rest of the broad occupational category, only taking jobs that are present in more than 2.5\% of clusters in the history. We compute an analogous heuristic for current jobs. We only include occupations that are encoded as occ1990dd categories, including the seven additional broad categories PSID includes in its taxonomy (e.g. ``employed''). 

\Cref{fig:history_clusters} shows the top clusters identified by the regression tree. Most of these clusters are formed by partitioning manager jobs into finer-grained histories. These clusters reveal important aspects of history that refine an individual's current occupation. For example, one cluster consists of managers who were previously computer scientists and engineering technicians. Managers with these jobs in their history get paid more than managers without them. These refinements may reflect differences in occupations that are not captured by the initial encoding (e.g. there is no occupational category for ``engineering manager''). While models that do not incorporate history omit these factors, they are captured by our model. 
\Cref{tab:app:all_15_top_clusters} shows all top 15 clusters along with the held-out log-wage MSE when adding history clusters to the original predictions of the LASSO model. Even when all clusters are included in the regression tree, the performance does not reach that of the full CAREER model. There are a few reasons for this. CAREER, which learns continuous representations for each history, is more flexible than models that treat groups of similar histories discretely. All histories in a cluster are not identical. So by differentiating between histories in a cluster, CAREER can more flexibly model wages than a model that treats all histories in a cluster the same. Additionally, although each cluster is treated separately by the regression tree, CAREER's continuous representations allow the model to pool information across histories. For example, if software engineering managers and hardware engineering managers are clustered into different categories, the regression tree cannot use the wages of software engineering managers to help predict those of hardware engineering managers. Meanwhile, CAREER is able to pool this information since its representations of history are not discrete. \Cref{tab:app:fine_grained_omitted_variables} extends the analysis by recomputing clusters for each of the three most-common fine-grained occupations (only considering those with at least an 80/20 gender balance). Again, these clusters are predictive of both wage and gender, resulting in omitted variable bias when they are not included in the analysis. 

\section{Additional Tables}
\label{app:sec:additional_tables}

\begin{table}[p]
  \footnotesize
  \centering
  \caption{The estimated gender wage ratio varies somewhat based on the threshold used to clip individual with extreme propensity scores. Across all thresholds, adjusting for full history explains more of the gap. Data is for the PSID sample from 1990-2019. Test-set bootstrapped standard errors are in parentheses.
  }  
  \resizebox{\textwidth}{!}{
    \begin{tabular}{llcccc}
    & & \multicolumn{4}{c}{\textbf{Clipping threshold}} \\
    \cmidrule{3-6}
      & \textbf{Adjustment method}  & 0.00 & 0.01 & 0.025 & 0.05 \\
    \midrule
    Data size & --- & 91,568 & 77,098 & 64,282 & 53,618 \\
    \midrule
    Unadjusted & --- & 0.771 (0.002) & 0.775 (0.004) & 0.781 (0.005) & 0.792 (0.007) \\
    \midrule
    Adjusted for summary statistics & Linear regression & 0.891 (0.001) & 0.886 (0.001) & 0.884 (0.001) & 0.880 (0.001) \\
    \midrule
    Adjusted for full history & Multi-task fine-tuning & 0.907 (0.001) & 0.907 (0.001) & 0.909 (0.001) & 0.910 (0.001) \\
    & Projection fine-tuning & 0.905 (0.001) & 0.904 (0.000) & 0.905 (0.001) & 0.906 (0.001) \\
    & Difference-based fine-tuning & 0.933 (0.000) & 0.934 (0.000) & 0.934 (0.000) & 0.934 (0.000) \\
    \bottomrule
    \end{tabular}}
  \label{tab:app:clipping_comparison}
\end{table}

\begin{table}[p]
  \footnotesize
  \centering
  \caption{The gender wage ratio depends on whether it is estimated with the outcome-only estimator or the AIPW estimator. For both estimators, adjusting for full history explains more of the gap. The unadjusted wage ratio is 0.775.  Data is for the PSID sample from 1990-2019. Test-set bootstrapped standard errors are in parentheses.
  }
  \begin{tabular}{ll  c c}
  \toprule
  Adjustment method & & Outcome-only & AIPW \\
       \midrule
  Adjusted for summary statistics & Linear regression & 0.886 (0.001) & 0.868 (0.017) \\ \midrule
  Adjusted for full history & Multitask fine-tuning & 0.907 (0.001) & 0.902 (0.012) \\
  & Projection fine-tuning & 0.904 (0.000) & 0.900 (0.008) \\
  \bottomrule
 \end{tabular}
  \label{tab:app:outcome_only_vs_AIPW}
\end{table}

The wage gap estimates in \Cref{sec:empirical_validation} use the outcome-only estimator. \Cref{tab:app:outcome_only_vs_AIPW} compares wage gap estimates from the outcome-only estimator to those made using the AIPW estimator (\Cref{eqn:aipw}). All results use 5 folds for cross-fitting and $0.01$ clipping. We do not compute AIPW estimates for representations from the difference-based fine-tuning approach because they're optimized for outcome-only estimators (\Cref{eqn:r_learner_outcome_model}).

\begin{table}[p]
  \footnotesize
  \centering
  \caption{Estimation MSE for semi-synthetic experiments. Each of the 27 settings is averaged over 10 random seeds.
  }
\resizebox{\textwidth}{!}{
\begin{tabular}{ccccccccc}
\multicolumn{3}{c}{\textbf{Settings}} & \multicolumn{4}{c}{\textbf{Estimation Error}} & \textbf{Validation metric}\\
\toprule
Treatment Effect & Shared Proportion & Noise Scale & Regression & Multi-task & Projection & Difference-based & Avg. R-Learner/Error Corr.\\
\midrule
0.0 & 0.0 & 0.1 & -0.201 (0.463) & -0.081 (0.214) & 0.010 (0.392) & 0.119 (0.381) & 0.150 (0.826) \\
0.0 & 0.0 & 0.5 & -0.205 (0.485) & 0.017 (0.052) & 0.011 (0.296) & 0.084 (0.480) & 0.132 (0.709) \\
0.0 & 0.0 & 1.0 & -0.198 (0.440) & 0.022 (0.072) & 0.089 (0.373) & 0.083 (0.420) & 0.182 (0.707) \\
0.0 & 0.5 & 0.1 & -0.167 (0.318) & 0.072 (0.160) & 0.072 (0.082) & 0.069 (0.168) & 0.284 (0.575) \\
0.0 & 0.5 & 0.5 & -0.182 (0.340) & 0.061 (0.176) & 0.075 (0.114) & 0.047 (0.102) & 0.417 (0.458) \\
0.0 & 0.5 & 1.0 & -0.205 (0.339) & 0.038 (0.124) & 0.051 (0.160) & 0.043 (0.090) & 0.333 (0.601) \\
0.0 & 1.0 & 0.1 & -0.382 (0.372) & 0.147 (0.237) & 0.470 (0.493) & 0.490 (0.476) & 0.580 (0.307) \\
0.0 & 1.0 & 0.5 & -0.455 (0.387) & 0.064 (0.200) & 0.438 (0.363) & 0.332 (0.289) & 0.548 (0.411) \\
0.0 & 1.0 & 1.0 & -0.401 (0.431) & 0.133 (0.175) & 0.498 (0.429) & 0.353 (0.340) & 0.395 (0.510) \\
0.1 & 0.0 & 0.1 & -0.224 (0.385) & -0.073 (0.180) & -0.023 (0.150) & 0.104 (0.242) & 0.144 (0.718) \\
0.1 & 0.0 & 0.5 & -0.182 (0.344) & -0.028 (0.119) & 0.099 (0.326) & 0.150 (0.293) & 0.051 (0.756) \\
0.1 & 0.0 & 1.0 & -0.148 (0.314) & 0.013 (0.080) & 0.171 (0.395) & 0.178 (0.327) & 0.186 (0.671) \\
0.1 & 0.5 & 0.1 & -0.188 (0.363) & 0.071 (0.224) & 0.069 (0.106) & 0.086 (0.146) & 0.278 (0.516) \\
0.1 & 0.5 & 0.5 & -0.186 (0.385) & 0.029 (0.043) & 0.072 (0.071) & 0.052 (0.171) & 0.303 (0.569) \\
0.1 & 0.5 & 1.0 & -0.187 (0.328) & 0.081 (0.235) & 0.058 (0.109) & 0.066 (0.109) & 0.209 (0.442) \\
0.1 & 1.0 & 0.1 & -0.373 (0.382) & 0.196 (0.198) & 0.581 (0.567) & 0.449 (0.530) & 0.501 (0.372) \\
0.1 & 1.0 & 0.5 & -0.457 (0.398) & 0.077 (0.171) & 0.453 (0.377) & 0.368 (0.300) & 0.551 (0.336) \\
0.1 & 1.0 & 1.0 & -0.360 (0.258) & 0.258 (0.209) & 0.504 (0.487) & 0.480 (0.365) & 0.438 (0.527) \\
0.5 & 0.0 & 0.1 & -0.195 (0.425) & -0.050 (0.226) & -0.051 (0.132) & 0.134 (0.319) & -0.033 (0.776) \\
0.5 & 0.0 & 0.5 & -0.117 (0.267) & 0.075 (0.289) & 0.055 (0.278) & 0.199 (0.370) & -0.025 (0.773) \\
0.5 & 0.0 & 1.0 & -0.171 (0.369) & 0.006 (0.062) & 0.102 (0.272) & 0.079 (0.390) & -0.140 (0.764) \\
0.5 & 0.5 & 0.1 & -0.167 (0.426) & 0.070 (0.101) & 0.059 (0.131) & 0.151 (0.190) & 0.295 (0.492) \\
0.5 & 0.5 & 0.5 & -0.187 (0.400) & 0.060 (0.139) & 0.032 (0.146) & 0.091 (0.220) & 0.313 (0.458) \\
0.5 & 0.5 & 1.0 & -0.222 (0.457) & 0.023 (0.127) & 0.013 (0.266) & 0.004 (0.305) & -0.057 (0.534) \\
0.5 & 1.0 & 0.1 & -0.383 (0.313) & 0.157 (0.221) & 0.576 (0.875) & 0.526 (0.491) & 0.538 (0.188) \\
0.5 & 1.0 & 0.5 & -0.306 (0.299) & 0.229 (0.306) & 0.602 (0.702) & 0.512 (0.398) & 0.559 (0.327) \\
0.5 & 1.0 & 1.0 & -0.473 (0.451) & 0.092 (0.359) & 0.588 (0.510) & 0.198 (0.310) & 0.271 (0.458) \\
\bottomrule
\end{tabular}}
  \label{tab:app:all_semi_synthetic_results}
\end{table} 

\end{document}